\documentclass{article}

\usepackage{microtype}
\usepackage{graphicx}
\usepackage{subcaption}
\usepackage{booktabs} 

\usepackage{hyperref}

\usepackage{xurl}

\usepackage[accepted]{icml2026}

\usepackage{amsmath}
\usepackage{amssymb}
\usepackage{mathtools}
\usepackage{amsthm}

\usepackage[capitalize,noabbrev]{cleveref}

\theoremstyle{plain}
\newtheorem{theorem}{Theorem}[section]

\usepackage{aliascnt}
\newaliascnt{lemma}{theorem}
\newtheorem{lemma}[lemma]{Lemma}
\aliascntresetthe{lemma}

\theoremstyle{definition}

\theoremstyle{remark}

\usepackage[textsize=tiny]{todonotes}

\usepackage{csquotes}
\usepackage[acronym]{glossaries}
\usepackage{glossaries-extra}
\usepackage{algorithm}

\usepackage{amsthm, bm, bbm, upgreek}
\usepackage{siunitx, commath, nicefrac}
\DeclareSIUnit{\nounit}{\relax}
\usepackage[short]{optidef}

\usepackage{url}
\usepackage[linesnumbered,ruled,vlined,noend,algo2e]{algorithm2e}
\usepackage{nicefrac}
\usepackage{multirow, multicol, makecell}

\newcommand{\nR}{\mathbb{R}}

\newcommand{\xmin}{{x_\text{min}}}
\newcommand{\xmax}{{x_\text{max}}}
\newcommand{\vmax}{{v_\text{max}}}
\newcommand{\amax}{{a_\text{max}}}
\newcommand{\tmax}{{t_\text{max}}}
\newcommand{\Cmin}{{C_\text{min}}}
\newcommand{\Cmax}{{C_\text{max}}}
\newcommand{\kmax}{{k_\text{max}}}
\newcommand{\vwall}{{v_\text{wall}}}
\newcommand{\piana}{{\pi_\text{ana}}}
\newcommand{\pioptxzero}{\pi_{\text{opt},x_0}}

\DeclareMathOperator*{\sign}{sign}
\DeclareMathOperator*{\std}{std}

\setabbreviationstyle[acronym]{long-short}
\newacronym{ai}{AI}{Artificial Intelligence}
\newacronym{a2c}{A2C}{Advantage Actor-Critic}
\newacronym{ars}{ARS}{Augmented Random Search}
\newacronym{api}{API}{Application Programming Interface}
\newacronym{drl}{DRL}{Deep Reinforcement Learning}
\newacronym{ddpg}{DDPG}{Deep Deterministic Policy Gradient}
\newacronym{ml}{ML}{Machine Learning}
\newacronym{mlp}{MLP}{Multilayer Perceptron}
\newacronym{rl}{RL}{Reinforcement Learning}
\newacronym{td3}{td3}{Twin Delayed Deep Deterministic Policy Gradient}
\newacronym{trpo}{TRPO}{Trust Region Policy Optimization}
\newacronym{tqc}{TQC}{Truncated Quantile Critics}
\newacronym{ode}{ODE}{ordinary differential equation}
\newacronym{ppo}{PPO}{Proximal Policy Optimization}
\newacronym{sota}{SOTA}{state of the art}
\newacronym{mdp}{MDP}{Markov Decision Process}
\newacronym{sl}{SL}{supervised learning}
\newacronym{sac}{SAC}{Soft Actor-Critic}
\newacronym{sb3}{SB3}{Stable Baselines3}

\icmltitlerunning{Chebyshev Policies and the Mountain Car Problem}

\begin{document}

\twocolumn[
  \icmltitle{Chebyshev Policies and the Mountain Car Problem:\\
    Reinforcement Learning for Low-Dimensional Control Tasks}



  \icmlsetsymbol{equal}{*}

  \begin{icmlauthorlist}
    \icmlauthor{Stefan Huber}{fhs}
    \icmlauthor{Hannes Unger}{fhs}
    \icmlauthor{Georg Schäfer}{fhs}
    \icmlauthor{Jakob Rehrl}{fhs}
  \end{icmlauthorlist}

  \icmlaffiliation{fhs}{Josef Ressel Centre for Intelligent and Secure Industrial Automation,
  University of Applied Sciences, Salzburg, Austria}

  \icmlcorrespondingauthor{Stefan Huber}{stefan.huber@fh-salzburg.ac.at}

  \icmlkeywords{Chebyshev Polynomials, Mountain Car Problem, Low-dimensional Control, Optimal Control, Reinforcement Learning}

  \vskip 0.3in
]



\printAffiliationsAndNotice{}  

\glsunset{ode}

\newcommand{\codeurl}{\cite{UngerSchaefer2026}}

\begin{abstract}
  We analytically solve the Mountain Car problem, a canonical benchmark in RL,
  and derive an optimal control solution, closing a gap after 36 years.
  This enables us to reveal two surprising insights: The optimal control is
  quite simple, yet modern RL agents display a large gap to optimality.
  Motivated by the analysis of the optimal control, we introduce Chebyshev
  policies as a universal (i.e.\ dense) class of RL policies from first
  principles.
  They can be trained as drop-in replacements of neural nets, reducing the
  regret by a factor of \num{6.18}, while requiring 277 times fewer parameters,
  fostering sample efficiency, explainability and real-time capability.
  Chebyshev policies are evaluated on further RL tasks, including a real-world
  non-linear motion control testbed. They consistently improve performance over
  neural nets with PPO, ARS and REINFORCE.
  Our results demonstrate how Chebyshev policies offer a compelling and
  lightweight alternative or addition to neural nets for low-dimensional
  control tasks.
\end{abstract}

\section{Introduction}

\subsection{Motivation}

\gls{rl} underwent a remarkable progress and became a very powerful paradigm to
tackle a large variety of control and decision-making tasks \cite{Sutton2018}
with plenty of applications in virtual and real-world environments.
At the same time, \gls{rl} also faces a number of challenges, especially when
applied to real-world tasks. Dulac-Arnold et al.~\cite{DulacArnold2021}
identified nine such challenges, including sample efficiency, explainability
and interpretability, real-time capability and training stability, see also
more recent surveys by Tang et al.~\cite{Tang2025} and Gazi et
al.~\cite{Gazi2026}. We also lack understanding on theoretical foundations, for
instance on \gls{rl} training dynamics and implicit regularization
\cite{Eysenbach2023}.

\begin{figure}[tb]
    \centering
    \includegraphics[width=0.85\linewidth]{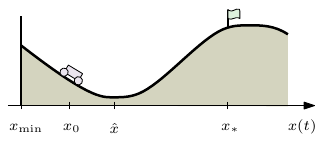}
    \caption{The car starts at $x_0$ and has to reach the goal at $x_*$ against
  gravity. There is an inelastic wall at $\xmin$.}
    \label{fig:mountaincar}
\end{figure}

In this paper, we take a step back and address these fundamental challenges by
revisiting a classic \gls{rl} benchmark task from ground up, namely the
\emph{Mountain Car} problem \cite{Moore1990}, and then consequently draw
conclusions from our learnings. As \cref{fig:mountaincar} illustrates, a car
shall apply minimum total propulsion to overcome gravity and reach the goal at
the top. The propulsion is too small to reach the goal at once, illustrating
the concept of delayed reward and exploration.

The Mountain Car problem is interesting, because the underlying system dynamics
is of a typical form for physics and engineering problems, while still being
simple enough to hope for an analytic treatment.
Still, the optimal solution to this problem is unknown for 36 years.
Consequently, the \emph{regret} (gap to optimality) of the current \gls{sota}
is unknown, too, leaving a simple, yet central question wide open: \emph{How
  close to optimality are current \gls{rl} algorithms actually on this
canonical benchmark task?}

\paragraph{Contribution}

We analytically derive an optimal control solution, which allows us to evaluate
the regret of the currently leading \gls{rl} agents, revealing two surprises:
The optimal solution is surprisingly simple and the currently top-performing
\gls{rl} agents display a surprisingly high regret.

Motivated by these findings, we facilitate a multi-variate generalization of
Chebyshev polynomials for a novel model of stochastic policies in \gls{rl},
which we call \emph{Chebyshev policies}. They form universal approximators in
the sense that they yield a dense subset of the space of continuous policies.
We evaluate them with \gls{ppo}, \gls{ars} and REINFORCE on the Mountain Car problem and
reduce the regret by a factor of \num{6.18}, while having a factor of \num{277}
less trainable parameters. This naturally addresses sample efficiency,
explainability, interpretability and real-time capabilities, and therefore
addresses five of nine key challenges of real-world \gls{rl} identified by
Dulac-Arnold \cite{DulacArnold2021}.

We evaluate Chebyshev policies on further low-dimensional control tasks, namely
the Pendulum environment of Gymnasium and the real-world helicopter-like Aero~2
testbed with non-linear dynamics. On all tested environments, Chebyshev
policies clearly improve upon \gls{mlp}-based policies, for both \gls{ppo} and \gls{ars},
and they improve on the sim-to-real transfer and the control dynamics.
Furthermore, our analysis of Mountain Car suggests two richer variants of this
benchmark that mitigates some simplicities of the optimal control.
All implementation, training and evaluation code is published at \codeurl.

\subsection{Prior Work and State of the Art}

Since the original introduction of the Mountain Car problem
\cite{Moore1990}, it evolved to different versions, e.g., concerning the
landscape function, parameters defining the motion law, or the goal. To the
best of our knowledge, the optimal solution to any known version of the
(continuous) Mountain Car problem is unknown. In particular no analytical
solution has been published so far.

We are considering the continuous Mountain Car problem as defined in
\cite{Gym-mountaincarcontinuous}.
A reward of \num{100} is given upon reaching the goal, forming a trivial upper
bound for the optimal \emph{return} (cumulative reward), which is reduced by
the propulsion applied.
The Gymnasium leaderboard page \cite{Gym-leaderbord} considers an agent to have
solved the problem when an average return of \num{90} for \num{100} consecutive
trials is reached.
RL Baselines3 Zoo is an \gls{rl} framework for the popular \gls{rl} library
Stable-Baselines3 compatible to Gymnasium environments. The framework offers
loading pre-trained agents, including for the continuous Mountain Car problem.
Mean returns of benchmark results \cite{Raffin2024} are in the range of
\numrange{91}{97}. The current \gls{sota} agents are trained by \gls{ars}
\cite{Mania2018}, \gls{sac} \cite{Haarnoja2018} and \gls{ppo} \cite{Schulman2017} with
expected returns \num{96.77}, \num{94.66}, and \num{94.22}, respectively, as
reconfirmed by our own evaluation in \cref{sec:evaluation}.

These agents facilitate \glspl{mlp} as approximators, typically with two hidden
layers. 
Also models with less parameters and non-parametric models, like Gaussian
Processes, have been explored for the use in \gls{rl} \cite{Grande2014}.
\cite{Rajeswaran2017} demonstrated the applicability of linear policies for
continuous control tasks, but not the Mountain Car problem. \cite{Schulman2015}
propose to train a single-layer \gls{mlp} on a number of random Fourier features
$f(s) = \sin(\langle s, v\rangle + \varphi)$ for random vectors $v$ and phase
shifts $\varphi$.  However, it is unclear whether these features would be
universal in the sense of forming a dense subspace of the policy space or how
well they would sample it.

In a \gls{sl} setting, \cite{Waclawek2024} demonstrated convergence
capabilities of Chebyshev polynomials for piecewise $\mathcal{C}^k$-continuous
approximation tasks in autodiff frameworks like PyTorch. However, we are not
aware that multi-variate Chebyshev polynomials have been facilitated as
parametrized models in \gls{sl} or \gls{rl} so far.

\section{Analytical Solution to Mountain Car}
\label{sec:analyticalsolution}

Let us recall \cref{fig:mountaincar}: A car starts at $x_0$ in the vicinity of
the minimum $\hat{x}$ and has to move to a goal $x_*$ close to the maximum,
while spending a minimum amount of propulsion effort against the gravitational
potential.
The constants of the motion laws are such that a single monotone stroke is
insufficient to reach $x_*$, but the car needs to oscillate.

The problem as coded in \cite{Gym-mountaincarcontinuous} can be formulated as
follows: Given a sequence $(\alpha_t)$ of actions $\alpha_t \in [-1, 1]$, the
trajectory $(x_t)$ of the car is governed by the system
\begin{align*}
    \begin{split}
        x_{t+1} - x_t &= v_{t+1}, \\
        v_{t+1} - v_t &= \amax \cdot \alpha_t - g \cdot \cos(3 x_t),
    \end{split}
\end{align*}
with $\amax = 0.0015$, $g = 0.0025$, and $x_t$ and $v_t$ are limited to
$[\xmin, \xmax] = [-1.2, 0.6]$ and $[-\vmax, \vmax] = [-0.07, 0.07]$,
respectively. The initial $x_0$ is taken from the uniform distribution
$\mathcal{U}([-0.6, -0.4])$ and $v_0 = 0$. The pairs $(x_t, v_t)$ form the
states and $\alpha_t \in [-1, 1]$ the actions for the agent.
The objective is to maximize the \emph{return}
\begin{align*}
  R = - 0.1 \cdot \sum_{t \ge 0} \alpha_t^2 + (100 \text{ once goal is reached})
\end{align*}
over all policies $(x_t, v_t) \stackrel{\pi}{\mapsto} \alpha_t$,
where the \enquote{goal is reached} if for some $t_* \le \tmax =  999$ we have
$x_{t_*} \ge x_*$ and $v_{t_*} \ge v_*$ with $(x_*, v_*) = (0.45, 0)$. Given
that $R \ge 0$ is achievable, we can rephrase the problem as follows:
\begin{mini}
  {\pi}{\ell}{\label{eq:contproblem}}{}
  \addConstraint{\forall t\colon}{x_t \in [\xmin, \xmax], v_t \in [-\vmax, \vmax]}
  \addConstraint{\exists {t_*}\colon}{x_{t_*} \ge x_* \wedge v_{t_*} \ge v_*}
\end{mini}
with $\ell = \sum_{t = 0}^{{t_*}} \alpha_t^2$, $v_0 = 0$ and $x_0 \sim
\mathcal{U}([-0.6, -0.4])$. We will denote by $t_*$ the time when the goal is
first reached.

We first translate the problem into a continuous setting as follows. For
$\alpha \colon [0, \infty) \to \mathbb{R}$ solve
\begin{mini}
    {\alpha}{\ell}{}{\label{eq:contopt}}
    \addConstraint{\exists t_* \in [0, \tmax]: \; x(t_*) \ge x_* \wedge \dot x(t_*) \ge v_*}{}{}
    \addConstraint{\alpha(t) \in [-1, 1]}{}{}
    \addConstraint{x(t) \in [\xmin, \xmax], \dot x(t) \in [-\vmax, \vmax]}{}{}
\end{mini}
where
\begin{align}
    \label{eq:contloss}
    \ell = \int_0^{t_*} \alpha(t)^2 \dif t
\end{align}
and $x \colon [0, \infty) \to \nR$ is governed by the nonlinear \gls{ode}
\begin{align}
    \label{eq:ode}
    \begin{split}
      \ddot x &= \amax \cdot \alpha - g \cdot \cos(3 x) \\
              &\quad \text{with} \quad x(0) = x_0, \; \dot x(0) = 0.
    \end{split}
\end{align}

This optimization problem will be solved in three steps: (i) we investigate the
dynamics of \eqref{eq:ode}, (ii) solve the unconstrained loss minimization of
\eqref{eq:contloss} and (iii) reestablish the constraints given in
\eqref{eq:contopt}.

\paragraph{Step 1: Spatial Formulation of the Dynamics}

In order to solve \eqref{eq:ode}, a central idea is to bring it into the form
$\ddot x = -U'(x)$. This requires the right-hand side of \eqref{eq:ode} to be
determined solely by the locus $x$ and to eliminate time: we need be able to
explain the action $\alpha(t)$ as a spatial \emph{action field} $\tilde
\alpha(x)$ instead. This form can then be approached with classic tools from
calculus, including \cref{thm:osc}, which can be found in textbooks like
\cite{Koenigsberger2000}, p.~277, and \cite{HandFinch1998}, p.~125, by the
following consideration: Defining $E = \frac{1}{2} \dot x^2 + U(x)$, observe
that
\begin{align*}
    \od{E}{t}
    = \dot x (\ddot x + U'(x)) = 0,
\end{align*}
saying that $E$ is constant.\footnote{$E$ is called \enquote{energy} and $U$ is
called \enquote{potential}.} Then the following theorem is known for nonlinear
oscillators:
\begin{theorem}[\cite{Koenigsberger2000}, p.~277]
    \label{thm:osc}
    Consider an interval $[e, f]$ with $U(e) = U(f) = E$ and $U'(e) \neq 0 \neq
    U'(f)$ and $U(\xi) < E$ for $\xi \in (e, f)$. Then a solution function
    $\phi$ of $\ddot x = -U'(x)$ periodically oscillates between $e$ and $f$
    with a period
    \begin{align*}
        T = 2 \int_e^f \frac{1}{\sqrt{2(E-U(\xi))}} \dif \xi
    \end{align*}
    and on the interval $[0, \frac{T}{2}]$ the inverse $\phi^{-1} \colon [e, f]
    \to [0, \frac{T}{2}]$ is  given by
    \begin{align*}
        \phi^{-1}(x) = \int_e^x \frac{1}{\sqrt{2(E-U(\xi))}} \dif \xi.
    \end{align*}
\end{theorem}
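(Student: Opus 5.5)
We plan to prove \cref{thm:osc} with energy conservation and separation of variables. Let $\phi$ be the solution of $\ddot x = -U'(x)$ with energy $E = \frac12\dot\phi^2 + U(\phi)$, and start it at the turning point, $\phi(0)=e$, $\dot\phi(0)=0$; any other solution on this energy level is a time shift of this one. As computed above, $E$ is constant along $\phi$, so $\dot\phi^2 = 2(E-U(\phi))$. Since $U(e)=E$ and $U'(e)\neq 0$, with $U<E$ just to the right of $e$, we have $U'(e)<0$. Hence $\ddot\phi(0) = -U'(e) > 0$, and the car moves into $(e,f)$. While $\phi(t)\in(e,f)$ we have $E-U(\phi)>0$, so $\dot\phi$ does not vanish and keeps its positive sign, giving
\begin{align*}
  \dot\phi = \sqrt{2(E-U(\phi))}.
\end{align*}
Separating variables and integrating from $e$ gives the claimed formula $\phi^{-1}(x)=\int_e^x (2(E-U(\xi)))^{-1/2}\dif\xi$ for $x\in[e,f]$.

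Next we justify the integrals. The integrand is singular only at $e$ and $f$. Because $U'(e)\neq 0$, Taylor expansion gives $E-U(\xi) = |U'(e)|(\xi-e)+o(\xi-e)$, so the integrand behaves like $c(\xi-e)^{-1/2}$ near $e$, which is integrable; the same holds at $f$. So $\tau := \int_e^f (2(E-U))^{-1/2}\dif\xi$ is finite, and $\phi$ reaches $f$ at time $\tau$, at which point $\dot\phi(\tau)=0$. This finiteness is exactly where the nondegeneracy hypothesis $U'\neq 0$ at the endpoints is needed. We expect this step to be the main obstacle, since it needs care to argue that the integral identity really reaches the endpoint. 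We would first use monotonicity and continuity of $\phi$ on $[0,\tau)$, then pass to the limit.

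For the return trip and periodicity we use time reversal. At $f$, $U'(f)>0$, so $\ddot\phi(\tau)<0$ and the motion turns back. The function $\psi(t)=\phi(2\tau - t)$ solves the same ODE, because the equation is invariant under $t\mapsto -t$. It also has the same data at $t=\tau$: position $f$ and velocity $0$. By uniqueness (assuming $U'$ is locally Lipschitz, e.g.\ $U\in\mathcal C^2$), $\phi(t)=\phi(2\tau-t)$. Hence $\phi$ returns to $e$ at time $2\tau$ with zero velocity. Uniqueness again shows $\phi(t+2\tau)=\phi(t)$ for all $t$, so $\phi$ oscillates between $e$ and $f$ with period $T=2\tau$. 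This is the stated formula, and on $[0,T/2]$ the map $\phi$ is a strictly increasing bijection onto $[e,f]$ with the inverse given above.
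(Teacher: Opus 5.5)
Your proof is correct and follows the route the paper indicates. The paper uses energy conservation $\frac{\mathrm{d}E}{\mathrm{d}t}=\dot x(\ddot x+U'(x))=0$, which gives $\dot\phi=\sqrt{2(E-U(\phi))}$ on the first half-period, and obtains $\phi^{-1}$ by separation of variables; it leaves the remaining details (endpoint integrability from $U'(e),U'(f)\neq 0$, time reversal and periodic extension) to K\"onigsberger, and you supply them. The local-Lipschitz hypothesis you add for uniqueness is not in the statement, but it is harmless and could be weakened to $U\in\mathcal{C}^1$: on this energy level the only rest points are $e$ and $f$, where $U'\neq 0$, so the separated first-order equation already determines the solution.
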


Assume for a moment we place no action, i.e., $\alpha(t) = 0$ and $U'(x) = -g
\cos(3x)$. Then \cref{thm:osc} says that the car will oscillate forth and back
periodically, $\dot x(e) = \dot x(f) = 0$, and in particular $e = x_0$ of our
optimization problem.\footnote{Note that $\ddot x = -g \cos(3x)$ describes the
  mathematical pendulum; its solution is the incomplete elliptic integral of
the first kind. A suitable physical interpretation of the Mountain Car problem
would be a pendulum where $x$ is the angle and $\alpha(t)$ is additional torque
to be applied to reach the goal.}
Since $U(x_*) > U(x_0)$ for our given constants, the car is not reaching the
goal without action: We have to apply action to lower the potential $U(x_*)$ at
the goal $x_*$.

If we apply action in the direction of $\dot x$, i.e., $\alpha(t) \cdot \dot x
(t) \ge 0$, then the potential $U$ is further lowered, and otherwise it is
raised. We can therefore restrict our consideration to actions with $\alpha
\cdot \dot x \ge 0$. The oscillation period of a pendulum in a potential $U$
only increases by the actions increasing the kinetic energy, which is
summarized as

\begin{lemma}
    \label{lem:oscperiod}
    Denote $T$ as in \cref{thm:osc} when no action is applied. For any $\alpha$
    with $\alpha \cdot \dot x \ge 0$, the roots of $\dot x(t)$ are at least
    $\frac{T}{2}$ apart.
\end{lemma}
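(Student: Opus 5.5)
The plan is to compare a single stroke of the controlled motion with a half-oscillation of the free motion, using the energy $E = \frac{1}{2}\dot x^2 + U(x)$, where $U(x) = \frac{g}{3}\sin(3x)$ is the unforced potential with $U'(x) = g\cos(3x)$ for the field $\ddot x = -g\cos(3x)$. Along any solution of \eqref{eq:ode} we have $\od{E}{t} = \dot x(\ddot x + U'(x)) = \amax\,\alpha\,\dot x \ge 0$ by the assumption $\alpha\cdot\dot x \ge 0$. Hence $E$ is nondecreasing in time. Fix two consecutive roots $t_1 < t_2$ of $\dot x$. Without loss of generality $\dot x > 0$ on $(t_1,t_2)$; the other sign is symmetric. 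Then $x$ increases monotonically from $a = x(t_1)$ to $b = x(t_2)$, and we can reparametrize by position. This gives the spatial energy $\tilde E(\xi)$, which is nondecreasing in $\xi$, with $\tilde E(a) = U(a)$ and $\tilde E(b) = U(b)$ because the velocity vanishes at both ends.

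The first key step is a pointwise velocity bound. For $\xi \in (a,b)$ we have $\tilde E(\xi) \le \tilde E(b) = U(b)$, so $\dot x = \sqrt{2(\tilde E(\xi) - U(\xi))} \le \sqrt{2(U(b) - U(\xi))}$. This yields
\begin{align*}
  t_2 - t_1 = \int_a^b \frac{\dif \xi}{\dot x} \ge \int_a^b \frac{\dif \xi}{\sqrt{2(U(b)-U(\xi))}}.
\end{align*}
The second step extends the lower limit. Let $e$ be the left turning point of the free oscillation at energy $U(b)$, so $U(e) = U(b)$ with $e$ in the same well. Since $U(a) = \tilde E(a) \le U(b)$ and $a$ lies on the descending branch of the well, we get $e \le a$ is false in general. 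Instead, we restrict to the relevant part: the right-hand integral over $[a,b]$ is at least the integral over $[\max(a,e), b]$. To obtain a full half-period, I would argue that $a \le e$. The reason is that $U$ is monotone on the left branch and $U(a) \le U(b) = U(e)$, so $a$ lies at or beyond $e$ on the outward side, which is exactly $a \le e$. Then by \cref{thm:osc} the integral is at least $\int_e^b \frac{\dif\xi}{\sqrt{2(U(b)-U(\xi))}} = T(U(b))/2$, the free half-period at energy $U(b)$.

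The final step, and the main obstacle, is monotonicity of the period in the energy level. Since $E$ is nondecreasing from its initial value $U(x_0)$, we have $U(b) \ge U(x_0)$, so it suffices that $E \mapsto T(E)$ is nondecreasing on the range of energies below the separatrix. For $\ddot x = -g\cos(3x)$ this is the classical pendulum. After the shift $y = 3x + \pi/2$, $T(E)$ is a multiple of the complete elliptic integral $K(k)$ with modulus $k$ increasing in $E$. $K$ is increasing, which I would cite or verify directly: substitute $\sin(\theta/2) = k\sin\varphi$ and differentiate under the integral, giving a positive integrand. Residual technicalities I expect to handle briefly are the following. First, the inelastic wall at $\xmin$ and the clipping of $\dot x$ at $\pm\vmax$ only slow the motion, so the bound survives. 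Second, the first stroke starts exactly at $x_0$ with $e = x_0$.
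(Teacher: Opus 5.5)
\textbf{The proposal has a genuine gap at step two, and the lemma as stated is false for general $\alpha$.}

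Your first step is fine: the energy identity $\frac{\mathrm{d}E}{\mathrm{d}t}=\amax\alpha\dot x\ge0$ and the pointwise bound $\dot x\le\sqrt{2(U(b)-U(\xi))}$ are both correct. The second step, however, has the inequality backwards. On the left branch $U$ decreases toward $\hat x$. So $U(a)\le U(b)=U(e)$ puts $a$ at or \emph{inside} $e$, i.e.\ $e\le a$, with $e<a$ as soon as the stroke gains any energy. Your remark that the first stroke has $e=x_0$ is the same error: at energy $U(b)>U(x_0)$ one has $e<x_0$.

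Hence $[a,b]\subseteq[e,b]$, and the right-hand side of $t_2-t_1\ge\int_a^b\frac{\dif\xi}{\sqrt{2(U(b)-U(\xi))}}$ is \emph{at most} $\frac12T(U(b))$, not at least. Your third step cannot compensate for this. The omitted piece $\int_e^a\frac{\dif\xi}{\sqrt{2(U(b)-U(\xi))}}\approx\sqrt{2(a-e)/|U'(e)|}$ is of order $\sqrt{U(b)-U(a)}$. By contrast, monotonicity of $E\mapsto T(E)$ only gains $T(U(b))-T(U(a))=O(U(b)-U(a))$.

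\textbf{Counterexample.} Start at rest at $x_0<\hat x$, and take $\alpha=1$ on $[0,\tau]$ and $\alpha=0$ afterwards. This is admissible, since $\ddot x(0)=\amax+g|\cos(3x_0)|>0$. After time $\tau$ the motion is free at energy $E_\tau=U(x_0)+\amax(x(\tau)-x_0)=U(x_0)+O(\tau^2)$, with left turning point $e_\tau<x_0$. So the next root of $\dot x$ is at
\[
  s=\tau+\tfrac{1}{2}T(E_\tau)-\int_{e_\tau}^{x(\tau)}\frac{\dif\xi}{\sqrt{2(E_\tau-U(\xi))}}.
\]
Two local estimates hold near the turning point:
\[
  x(\tau)-x_0\approx\tfrac{1}{2}(\amax+|U'(x_0)|)\tau^2, \qquad x_0-e_\tau\approx\amax(x(\tau)-x_0)/|U'(x_0)|.
\]
With these, the integral equals $\tau(1+\amax/|U'(x_0)|)+O(\tau^2)$. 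Therefore
\[
  s=\frac{T}{2}-\frac{\amax}{g|\cos(3x_0)|}\,\tau+O(\tau^2)<\frac{T}{2}
\]
for small $\tau>0$. Intuitively, pushing while descending toward $\hat x$ advances the phase at first order in $\tau$. The added energy lengthens the period only at order $\tau^2$, and for a harmonic well not at all.

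\textbf{Comparison with the paper.} Your last step is in fact the paper's whole argument. Its appendix only writes $T=\frac{4}{\sqrt{3g}}K(k)$ and relies on the period growing with amplitude. The comparison you tried to make explicit in step two is exactly what that argument leaves out, and it cannot be supplied.

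\textbf{What can be salvaged.} A weaker statement is true, and it is all the paper uses (finitely many roots before $t_*$). Since $E$ is nondecreasing, consecutive turning points lie on opposite sides of $\hat x$. They also lie outside the open interval between $x_0$ and its reflection $\hat x_0$. So each stroke has length at least $2|x_0-\hat x|$. With $|\dot x|\le\vmax$, roots are then at least $2|x_0-\hat x|/\vmax$ apart, for $x_0\neq\hat x$ and before the wall is hit.

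\textbf{The wall.} It does not ``only slow the motion.'' It creates a root of $\dot x$ early and removes energy, which is why the paper restarts the analysis from $(\xmin,0)$.
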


Assume the car reaches the goal in finite time $t_*$ then we have finitely many
roots $t_0, \dots, t_{k-1}$ of $\dot x(t)$ before $t_*$ by
\cref{lem:oscperiod}. Note that $t_0 = 0$ and for simplicity we define $t_k
= t_*$. On each interval $[t_{i-1}, t_i]$, which we call a \emph{stroke}, we
have a strictly monotone $x$. We denote $x_i = x(t_i)$, yielding $x_k = x_*$.

Let us focus on each single stroke $I_i = [t_{i-1}, t_i]$ and denote by
$\phi_i$ a solution as in \cref{thm:osc}, which can therefore be inverted on
$I_i$, i.e., $\phi_i^{-1}$ translates $x$ into $t$. This allows us to define
the action field $\tilde \alpha_i$ over $x$ for the $i$-th stroke as
%
    $\tilde \alpha_i = \alpha \circ \phi_i^{-1}$.
%
This notion of an action field now allows us to bring \eqref{eq:ode} into a
purely spatial form $\ddot x = -U_i'(x)$,
where $U_i(x) = U_{i,a}(x) + U_g(x)$ with
\begin{align*}
    U_{i,a}(x) &= -\amax \cdot \int_{x_{i-1}}^{x} |\tilde \alpha_i(z)| \dif z + U_{i-1,a}(x_{i-1}) \\
    \quad\text{and}\quad
    U_g(x) &= \frac{g}{3} \left(\sin(3x) - \sin(3x_0) \right),
\end{align*}
where we define $U_{0,a}(x_0) = 0$. Note that on the $i$-th stroke the domain
of $U_i$ is $[x_{i-1}, x_i]$. Also note that by choice of the integration
constants, we have $U_1(x_0) = U_g(x_0) = U_{1,a}(x_0) = 0$ at the first stroke
$I_1$. Also the $U_i$ of consecutive strokes meet continuously, i.e, $U_i(x_i)
= U_{i+1}(x_i)$. Hence, we have $E = 0$ over all strokes. Consequently, at the
last stroke $I_k$ we require $U_k(x_*) \le -\frac{1}{2} v_*^2$ at the goal,
since $E = \frac{1}{2} \dot x^2 + U_k(x) = 0$.

\begin{figure}[htb]
    \tikzstyle{vertex}=[circle,fill=black,minimum size=4pt,inner sep=0pt]
    \centering

    \def\xgoal{0.45}
    \def\xzero{-0.9}
    \def\xone{0.02}
    \def\xtwo{-1.255}
    \def\xthree{0.305}
    \def\xizero{\xzero}
    \def\xione{xibyxone(\xone)}
    \def\xitwo{xibyxtwo(\xtwo)}
    \def\xithree{xibyxthree(\xthree)}

    \begin{tikzpicture}[%
      xscale=1.5,
      yscale=0.7,
      declare function={%
        xibyxone(\x)   = \xizero + (\x - \xzero);
        xibyxtwo(\x)   = \xione  - (\x - \xone);
        xibyxthree(\x) = \xitwo  + (\x - \xtwo);
        xbyxione(\xi)   = \xzero + (\xi - \xizero);
        xbyxitwo(\xi)   = \xone  - (\xi - \xione);
        xbyxithree(\xi) = \xtwo  + (\xi - \xitwo);
        Ug(\x) = sin(3*\x r) - sin(3*\xzero r);
        Ua(\x,\base,\span) = 0.25*(cos(((\x-\base)/(\span))*180)-1);
        U1(\xi) = Ug(xbyxione(\xi)) + Ua(\xi, \xizero, \xione-\xizero);
        U2(\xi) = U1(\xione) - Ug(\xone) + Ug(xbyxitwo(\xi)) + Ua(\xi, \xione, \xitwo-\xione);
        U3(\xi) = U2(\xitwo) - Ug(\xtwo) + Ug(xbyxithree(\xi)) + Ua(\xi, \xitwo, \xithree-\xitwo);
      },
      ]

      \draw[->] (-0.2+\xizero, 0) -- ({\xithree+1.2}, 0) node[above] {$\xi$};

      \draw[dotted] ({\xizero},1.7)   -- ({\xizero},-1.4) node[below] {$\xi_0$};
      \draw[dotted] ({\xione},1.7)    -- ({\xione},-1.4) node[below]  {$\xi_1$};
      \draw[dotted] ({\xitwo},1.7)    -- ({\xitwo},-1.4) node[below]  {$\xi_2$};

      \draw[domain=\xizero:\xione, smooth, samples=40, gray] plot (\x, {Ug(xbyxione(\x)});
      \draw[domain=\xizero:\xione+0.58, smooth, samples=40, dashed] plot (\x, {U1(\x)}) node[right] {$U_1(x)$};
      \draw[domain=\xizero:\xione, smooth, thick, samples=40] plot (\x, {U1(\x)});

      \draw[domain=\xione:\xitwo, smooth, samples=40, gray] plot (\x, {Ug(xbyxitwo(\x))});
      \draw[domain=\xione:\xitwo, smooth, thick, samples=40] plot (\x, {U2(\x)});

      \draw[domain=\xitwo:\xithree+0.4, smooth, samples=40, gray] plot (\x, {Ug(xbyxithree(\x))}) node[right] {$U_g(\xi)$};
      \draw[domain=\xitwo:\xithree+0.5, smooth, thick, samples=40] plot (\x, {U3(\x)}) node[right] {$U(\xi)$};

      \draw ({xibyxthree(\xgoal)}, {Ug(\xgoal)}) node[vertex,fill=gray] (goalg) {};
      \draw ({xibyxone(\xgoal)}, {U1(xibyxone(\xgoal))}) node[vertex] {};
      \draw ({xibyxthree(\xgoal)}, {U3(xibyxthree(\xgoal))}) node[vertex] (goalu) {};

      \draw[<->] (goalg) -- node[right] {$U_a(\xi_*)$} (goalu);

      \draw[<->] ({\xitwo+0.53}, {Ug(xbyxithree(\xitwo+0.53))}) -- node[right] {$U_a(\xi)$} ({\xitwo+0.53}, {U3(\xitwo+0.53)});

      \tikzstyle{label}=[draw, circle,inner sep=1pt,font=\small]

      \draw[<->] ({\xizero}, 2.2) -- ({\xione}, 2.2) node[midway, above] {$\dot x > 0$} node[midway, below, label] {1};
      \draw[<->] ({\xione}, 2.2) -- ({\xitwo}, 2.2) node[midway, above] {$\dot x < 0$} node[midway, below, label] {2};
      \draw[<->] ({\xitwo}, 2.2) -- ({\xithree+1}, 2.2) node[midway, above] {$\dot x > 0$} node[midway, below, label] {3};
    \end{tikzpicture}
    \caption{The potential $U$ and $U_g$ over $\xi$, with three strokes. The
      difference is $U_a$. When enough action is applied, the goal (black dot)
      is lowered to negative potential and hence reached at positive velocity.
      In dashed we extended $U_1$ beyond the 1st stroke.}
    \label{fig:xipotential}
\end{figure}

To simplify matters and to reestablish a holistic view over all strokes, we
introduce a new spatial variable $\xi$ that \enquote{unrolls} the
forth-and-back motion of $x$, like the car's odometer, by defining $\xi(t) =
x_0 + \int_0^t |\dot x(\tau)| \dif \tau$ and $\xi_i = \xi(t_i)$ analogously to
$x_i$ and $\xi_* = \xi(t_*)$. Note that $\dot \xi = |\dot x|$ and as $\xi$ is
invertible and we can uniquely reconstruct $x$ from $\xi$. This allows us to
pull over $\tilde \alpha(\xi)$, $U(\xi)$, $U_g(\xi)$ and $U_a(\xi)$ to $\xi$
with a slight abuse of notation by dropping the indices $i$. In
\cref{fig:xipotential} we visualize this construction.

To reach the goal we therefore require from the accumulation of the action
$\tilde \alpha(\xi)$ over all strokes, which is $-U_a(\xi_*)$, to overcome the
gravitational potential $U_g(x_*)$, i.e.,
\begin{align}
    \label{eq:goalxicondition}
    0 \le \amax \cdot \int_{\xi_0}^{\xi_*} \tilde \alpha(\xi) \dif \xi - U_g(x_*) - \frac{1}{2} v_*^2.
\end{align}
Note that $\tilde \alpha(\xi) \ge 0$ for the unrolled action field. We can
interpret the right-hand side of \eqref{eq:goalxicondition} as the excessive
(kinetic) energy at the goal: Equality in \eqref{eq:goalxicondition} means no
excessive kinetic energy is wasted.

\paragraph{Step 2: Unconstrained Loss Minimization}

In the next step we address the loss minimization problem in
\eqref{eq:contloss}, but without the constraints given in \eqref{eq:contopt},
except that the goal has to be reached, i.e., \eqref{eq:goalxicondition} needs
to be fulfilled. We first translate the loss from the time domain to the
unrolled spatial domain by variable substitution $t \mapsto \xi$. Using $\dot
\xi = |\dot x| = \sqrt{-2U(\xi)}$ we therefore get
\begin{align}
    \ell &= \int_0^{t_*} \alpha(t)^2 \dif t
         = \int_{\xi_0}^{\xi_*} \tilde \alpha(\xi)^2 \cdot \od{t}{\xi} \dif \xi \notag \\
         &= \int_{\xi_0}^{\xi_*} \left( \frac{\tilde \alpha(\xi)}{\sqrt{|\dot x|}} \right)^2 \dif \xi
         = \int_{\xi_0}^{\xi_*} \left( \frac{\tilde \alpha(\xi)}{\sqrt[4]{-2U(\xi)}} \right)^2 \dif \xi.
           \label{eq:lossspace}
\end{align}

One interpretation of \eqref{eq:lossspace} is that we pay less loss at higher
velocity, i.e., we yield more kinetic work of the same action at higher
velocity over a given timespan. Next we will use the following lemma for the
Hilbert space $\mathcal{L}^2([\xi_0, \xi_*])$ of square-integrable functions
over $[\xi_0, \xi_*]$, which is a consequence of the Cauchy-Schwarz inequality
(details in \cref{sec:apx-proof-csvariant}):
\begin{lemma}
    \label{lem:csvariant}
    Let $f, g \in \mathcal{L}^2([\xi_0, \xi_*])$. Then $\min_f \|f\|$ s.t.\
    $\langle f, g \rangle = 1$ is solved by $f = g / \|g\|^2$.
\end{lemma}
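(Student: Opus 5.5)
The plan is to read the statement with $g \neq 0$ as a fixed element of $\mathcal{L}^2([\xi_0, \xi_*])$, since otherwise the constraint $\langle f, g\rangle = 1$ is infeasible. We then minimize over $f$ and prove two things: the candidate $f^\circ = g/\|g\|^2$ is feasible, and every feasible $f$ satisfies $\|f\| \ge \|f^\circ\|$. Uniqueness of the minimizer will come out of the equality case of Cauchy--Schwarz.

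Feasibility is a one-line computation: $\langle f^\circ, g\rangle = \langle g, g\rangle/\|g\|^2 = 1$. For the lower bound, take any $f$ with $\langle f, g\rangle = 1$. Cauchy--Schwarz in the Hilbert space $\mathcal{L}^2([\xi_0,\xi_*])$ gives $1 = \langle f, g\rangle \le \|f\|\,\|g\|$, hence $\|f\| \ge 1/\|g\|$. Since $\|f^\circ\| = \|g\|/\|g\|^2 = 1/\|g\|$, the bound is attained at $f^\circ$, so $f^\circ$ is a minimizer.

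For uniqueness, which matters for using the lemma to pin down the optimal action field, suppose $\|f\| = 1/\|g\|$. Then Cauchy--Schwarz holds with equality, so $f = \lambda g$ for some scalar $\lambda$ (equality in $\mathcal{L}^2$, i.e.\ almost everywhere). The constraint then forces $\lambda\|g\|^2 = 1$, so $f = f^\circ$.

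An alternative, more geometric route gives the same conclusion: decompose $f = \lambda g + h$ with $h \perp g$. The constraint fixes $\lambda = 1/\|g\|^2$, and Pythagoras gives $\|f\|^2 = \lambda^2\|g\|^2 + \|h\|^2$, which is minimized exactly when $h = 0$.

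None of these steps is a real obstacle. The only points that need care are the implicit assumption $g \neq 0$ and the fact that the minimizer is unique only up to almost-everywhere equality.
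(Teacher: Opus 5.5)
Your proposal is correct and follows essentially the same route as the paper, which argues via the Cauchy--Schwarz inequality and its equality case and then gives the geometric picture of projecting the origin onto the hyperplane $\{f \colon \langle f, g\rangle = 1\}$; your orthogonal decomposition $f = \lambda g + h$ is that same picture. Your write-up is more careful than the paper's: you check that $g/\|g\|^2$ is feasible, state the implicit assumption $g \neq 0$, and derive uniqueness (up to almost-everywhere equality) from the equality case.
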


\begin{theorem}
    \label{thm:optalpha}
    The loss $\ell$ is minimized over all goal-reaching actions by $\alpha(t) =
    C \cdot \dot x(t)$ for some constant $C$ that fulfills
    \eqref{eq:goalxicondition} to equality.
\end{theorem}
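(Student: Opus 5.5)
We work in the unrolled spatial variable $\xi$, where both the loss and the goal condition become functionals of the action field $\tilde\alpha$. The loss has the form $\|f\|^2$ with $f(\xi) = \tilde\alpha(\xi)/\sqrt[4]{-2U(\xi)}$ by \eqref{eq:lossspace}. The goal condition \eqref{eq:goalxicondition} is a linear constraint in $\tilde\alpha$. We rewrite it as an inner-product constraint $\langle f, g\rangle = 1$ with
\begin{align*}
  g(\xi) = \frac{\amax}{K}\sqrt[4]{-2U(\xi)},
  \qquad
  K = U_g(x_*) + \tfrac12 v_*^2 .
\end{align*}
Indeed $\tilde\alpha = f\cdot\sqrt[4]{-2U}$, so $\amax\int\tilde\alpha\,\dif\xi = K\langle f,g\rangle$. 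We may assume equality in \eqref{eq:goalxicondition}, since any excess is wasted: scaling $\tilde\alpha$ down to equality lowers $\|f\|$. Then \cref{lem:csvariant} yields the minimizer $f = g/\|g\|^2$. Translating back gives $\tilde\alpha(\xi) = c\,\sqrt{-2U(\xi)} = c\,|\dot x|$ for a constant $c$. Together with the sign convention $\alpha\cdot\dot x\ge0$, which we justified earlier, this becomes $\alpha(t) = C\,\dot x(t)$, with $C$ fixed by equality in \eqref{eq:goalxicondition}.

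The main obstacle is that $g$ is not a fixed function. It depends on $U$, and $U = U_a + U_g$ itself depends on the chosen action via $U_a$; the interval endpoint $\xi_*$ also depends on the action. So \cref{lem:csvariant} cannot be applied naively. I plan to handle this with a fixed-point, first-order optimality argument. Fix a candidate optimal $\tilde\alpha$ with its induced $U$ and $\xi_*$, and consider perturbations. The goal constraint $\amax\int\tilde\alpha\,\dif\xi \ge K$ involves only $\tilde\alpha$ over the unrolled path, and not $U$. The coupling therefore enters only through the weight $(-2U)^{-1/2}$ in the loss.

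There are two ways to close the argument. The first is to argue directly from the intuition after \eqref{eq:lossspace}. Replace $\tilde\alpha$ by the field prescribed by the lemma for the current weight. This lowers the loss while keeping the constraint. It also front-loads energy where speed is already high, so $-U$ grows pointwise, which can only decrease the weight further. Iterating this gives monotone improvement, and any optimum must be a fixed point, i.e.\ $\tilde\alpha\propto\sqrt{-2U}$ for its own $U$.

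The second way is a Lagrangian argument in time. Minimize $\int\alpha^2\,\dif t$ subject to the goal-energy constraint. The terminal energy gained is $\amax\int\alpha\,\dot x\,\dif t$, and stationarity gives $2\alpha = \lambda\,\amax\,\dot x$. This yields the same conclusion, and it sidesteps the dependence of $U$ on the action because the energy identity holds along any trajectory.

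I would favour the second route for rigor. I would then remark that it agrees with \cref{lem:csvariant} in the $\xi$-picture, and finally determine $C$ from equality in \eqref{eq:goalxicondition}.
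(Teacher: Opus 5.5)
\paragraph{Verdict: the circularity you spotted is real, but neither repair closes it, and no repair can.} Your first paragraph reproduces the paper's proof exactly: scale down to equality in \eqref{eq:goalxicondition}, set $f=\tilde\alpha/\sqrt[4]{-2U}$ and $g=\amax\sqrt[4]{-2U}/(U_g(\xi_*)+\tfrac{1}{2}v_*^2)$, invoke \cref{lem:csvariant}, and use $\alpha\cdot\dot x\ge 0$. The paper stops there. You are right that this is circular: $g$, the turning points (hence the map $\xi\mapsto x$) and $\xi_*$ all depend on $\tilde\alpha$.

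\emph{Lagrangian route.} The functional $\alpha\mapsto\amax\int_0^{t_*}\alpha\,\dot x\,\mathrm{d}t$ is not linear, because $\dot x$ and $t_*$ depend on $\alpha$. Its first variation contains $\amax\int_0^{t_*}\alpha\,\delta\dot x\,\mathrm{d}t$ and a $\delta t_*$ term, and writing $2\alpha=\lambda\amax\dot x$ discards both. This is the same frozen-trajectory step you objected to in the $\xi$-picture. The energy identity holding along every trajectory does not make the constraint linear in $\alpha$.

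\emph{Iteration route.} The claim that $-U$ grows pointwise is unfounded. Old and new fields inject the same total work, and speed grows from stroke to stroke. So $\tilde\alpha_{\mathrm{new}}\propto|\dot x_{\mathrm{old}}|$ typically shifts work to later strokes, lowering the speed and raising the weight early on (and moving the turning points). Without a monotone decrease of the \emph{true} loss, nothing forces an optimum to be a fixed point. Being optimal for one's own frozen weight is not the stationarity condition of the actual problem.

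\paragraph{The statement fails as an exact optimality claim.} Without $\tmax$ there is no minimizer at all. For this feedback $\ell=C\int_0^{t_*}\alpha\dot x\,\mathrm{d}t$ is $C/\amax$ times the injected work, so $\ell\to 0$ as $C\to 0$ (for $x_0\neq\hat x$), and $\ell=0$ is never attained.

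With a horizon, take $H=\alpha^2+p_1v+p_2(\amax\alpha-g\cos 3x)$. The control is interior, so the multiplier is normal, and Pontryagin's principle gives $\alpha=-\tfrac{\amax}{2}p_2$ with $\ddot p_2=3g\sin(3x)\,p_2$ along the optimum, whatever the terminal conditions. If $\alpha=C\dot x$, then $p_2\propto\dot x$, which requires $\dddot x=3g\sin(3x)\,\dot x$. The closed loop instead gives $\dddot x=\amax C\ddot x+3g\sin(3x)\,\dot x$, so one would need $\amax C\ddot x\equiv 0$, which is impossible along an oscillation.

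The same shows up in your $\xi$-picture on a single stroke, where the path is fixed. Let $W(\xi)=\amax\int_{\xi_0}^{\xi}\tilde\alpha$ and let $K=W-U_g$ be the kinetic energy. Then $\ell\propto\int W'^2K^{-1/2}\,\mathrm{d}\xi$, with Euler--Lagrange equation $\tfrac{\mathrm{d}}{\mathrm{d}\xi}\bigl(2W'K^{-1/2}\bigr)=-\tfrac{1}{2}W'^2K^{-3/2}$. This is violated by $W'=cK^{1/2}$: the left side is zero and the right side is negative. For $U_g\equiv 0$ the minimizer is $\tilde\alpha\propto|\dot x|^{1/2}$.

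\paragraph{What can actually be proved.} The Cauchy--Schwarz step genuinely proves a frozen-weight statement: for a \emph{given} speed profile $|\dot x|(\xi)$ on a given unrolled path, $\tilde\alpha\propto|\dot x|$ is the cheapest way to inject the required work. That makes $C\dot x$ a reasonable approximation when $\amax C$ is small relative to the oscillation frequency. This is consistent with the small gains the paper's fmincon single-stroke test finds over it. Your write-up should claim only that weaker statement, or present $C\dot x$ explicitly as an approximation.
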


\begin{proof}
  See \cref{sec:apx-proof-optalpha} for all details. For a brief sketch, use
  equality in \eqref{eq:goalxicondition} and apply \cref{lem:csvariant} to
  $f(\xi) = \tilde\alpha(\xi) / \sqrt[4]{-2U(\xi)}$ and $g(\xi) = \amax
  \sqrt[4]{-2U(\xi)} / \left(\frac{1}{2}v_*^2 + U_g(\xi_*)\right)$.
\end{proof}

\begin{figure}[b]
  \centering
  \includegraphics{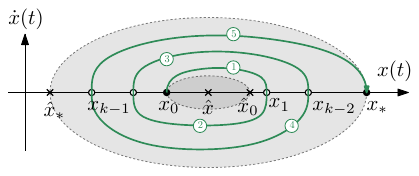}
  \caption{Trajectories in state space for the unconstrained optimization with
    $k=5$ strokes.}
  \label{fig:phasespaceunconstrained}
\end{figure}

Note that $\alpha(t) = C \cdot \dot x(t)$ has a quite simple mathematical form
and, in particular, it directly admits the policy $\pi \colon (x_t, v_t)
\mapsto C \cdot v_t$ with the given state and action spaces. It is actually
independent of $x_t$.

However, \cref{thm:optalpha} does not reveal the optimal $C$ but we have to search
for it. When we have a small $C$ then little action is applied, so for each
stroke the potential $U$ is slowly lowered and the number $k$ of strokes will
be high until the goal is eventually reached, see details in
\cref{sec:apx-unconstrainedexp}.

In \cref{fig:phasespaceunconstrained} we give a graphical interpretation of the
solutions of \cref{thm:optalpha}. The minimum of $U_g$ is denoted by $\hat{x}$
and $\hat{x}_i$ is the reflection $x_i$ at $\hat{x}$, and likewise for
$\hat{x}_*$. Assume we start at $x_0 < \hat{x}$ and do not apply any action.
Then the trajectory oscillates cyclically between $x_0$ and $\hat{x}_0$, as
indicated by the smaller ellipse-like shape. When action is applied, the
trajectory bends up, causing $x_1 > \hat{x}_0$ for the first stroke. This leads
to a spiraling trajectory ending at $x_*$ at velocity $v_* = 0$. If we increase
$C$ slightly then we reach $x_*$ with an excessive velocity $\dot x(t_*)
> v_*$. Also $x_{k-2}$ increases until $x_{k-2} = x_*$ and then we reach the
goal in only three strokes. If we reduce $C$ then the 5-th stroke does not
reach $x_*$ and we require 7 strokes to reach $x_*$, and if we reduce $C$
further we will reach $x_*$ in 7 strokes at zero velocity. If $C>0$ becomes
ever smaller then the outer ellipse will be filled with the strokes of the
trajectory, with $k, t_*, \xi_*$ tending to $\infty$.

\paragraph{Step 3: Reestablishing the Constraints}

In the final step we reestablish the constraints given in \eqref{eq:contopt}.
First we note that $|\dot x| \le \vmax$ and $x \in [\xmin, \xmax]$ is enforced
by the \gls{rl} environment. If $\dot x$ or $x$ would leave their respective
interval then they are limited. And when $x$ is limited, also $\dot x$ is set
to zero, i.e., as an inelastic bump into a wall.
We distinguish two cases:
\begin{enumerate}

  \item The trajectory does not reach the left wall. We call this a
    \emph{single-phase trajectory}.

  \item The trajectory reaches the left wall. We call this a \emph{two-phase
    trajectory}.

\end{enumerate}
In either case we identify a finite number of candidate trajectories, depending
on the value of $k$, and the optimal solution is the one with the smallest
loss.
In the unconstrained case we see that the smaller $C$ the smaller $\ell$, but
once we hit constraints this will become unclear. We first note that we have $k
\ge 2$ as $k=1$ is impossible by \eqref{eq:goalxicondition} with the given
constants
. Hence,
we have an upper bound $\Cmax$ due to $k \ge 2$. Also note that there is
a smallest $\Cmin$ such that for all $C \ge \Cmin$ we respect $t_* \le \tmax$.
As a consequence, there is an upper bound for $k$, which we denote by $\kmax$.

\emph{Single-phase trajectories.}
Consider a $C \in [\Cmin, \Cmax]$. Assume we reach $x_*$ at some speed $\dot
x(t_*)$ in $k$ strokes. When we reduce $C$ then we reduce the goal velocity
$\dot x(t_*$), increase $t_*$, but we also reduce $\xi_* - \xi_0$ and hence
$\ell$. Let us reduce $C$ until $\dot x(t_*) = v_*$ or $t_* = \tmax$, while
leaving $k$ unchanged. This yields an $\ell$-optimal $C_k$ respecting $t_* \le
\tmax$ for all $2 \le k \le \kmax$. Figuratively speaking, we can think of
starting with any $C$ large enough such that $k=2$, and then reduce $C$ down
until $\Cmin$ while discovering all $C_k$. If $\min_t x(t) = x_{k-1}$ becomes
less than $\xmin$ after a sufficiently large $k$ then these single-phase
solutions are not feasible.

\emph{Two-phase trajectories.}
\label{sec:analysis-twophase}
Assume $x(t)$ hits $\xmin$ at time $\tau$. Then the state $(x(\tau), \dot
x(\tau))$ is reset to $(\xmin, 0)$ by the environment. This reset separates the
future from the past, so we can discuss them independently, i.e., how to reach
$\xmin$ and how to continue to $x_*$ in an $\ell$-optimal fashion. This
establishes the \emph{phase~1} until $\tau$ and \emph{phase~2} after $\tau$.
Observe that we have to reach the goal $x_*$ in a single stroke in phase~2,
otherwise we would just reenter state $(\xmin, 0)$ later on.

Roughly speaking, for any given number $k$ of strokes, we search for the
optimal $C_{1,k}$ for phase~1 comprising $k-1$ and the optimal $C_{2,k}$ for
phase~2. In \cref{fig:phasespace-leftwall} we have a green phase~2 trajectory
that reaches $x_*$ at velocity $v_*$, which is zero. All other phase~2
trajectories have excessive velocity and reside above this trajectory. Hence,
the state space is separated into the green shaded area of phase~2 trajectories
and the complement, in which the phase~1 trajectories reside. This eventually
allows us to formulate a single policy $\pi \colon (x, \dot x) \mapsto \alpha$
also for two-phase trajectories in the next section.
A full discussion of two-phase trajectories and the different cases is given in
\cref{sec:apx-twophasetrajectories}.

\begin{figure}[b]
  \centering
  \includegraphics{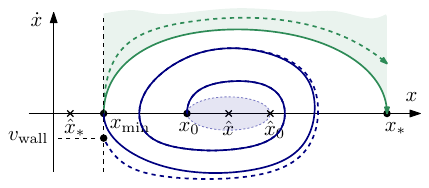}
  \caption{Phases~1 and 2. In the green area phase~2 trajectories can reside
  in. The green line gives the optimal phase~2 trajectory, and bounds the green
area. The dashed trajectory has $\vwall \neq 0$.}
  \label{fig:phasespace-leftwall}
\end{figure}

\section{Optimal Policy for Mountain Car}

\paragraph{Single- versus Two-Phase Trajectories}

A numerical search revealed that single-phase trajectories do exist. For
instance, for $x_0 = -0.6$ we obtain the optimal $C=\num{4.891}$, leading to
a loss of $\ell = \num{5.354}$, hence a of return of $\num{99.46}$, and a goal
velocity of zero, while not reaching the left wall.
However, the best two-phase trajectories from $x_0 = -0.6$ has a slightly
better return of $\num{99.63}$. We experimentally found that single-phase
trajectories are sub-optimal for all starting points $x_0$, limiting our
discussion to two-phase policies.
Furthermore, a search for the optimal two-phase trajectories showed that for
each $x_0$ the optimum is achieved by having $k$ such that the left wall is
reached at velocity zero and the goal at velocity zero. We determined $C_{2,k}
= \num{4.8358}$, while $C_{1,k}$ depends on $x_0$. The constraint $\vmax$ was
never hit and can therefore be ignored for the optimal control.

\paragraph{The Analytical Worst-Case Policy}

The training goal of \gls{rl} agents follows a \gls{mdp} regime and hence we
maximize the expected return, i.e., consider $\max_\pi \mathbb{E}_{x_0 \sim
\mathcal{U}([-0.6, -0.4])}(R)$. But since the penalty of not reaching the goal
is more than two magnitudes higher than the regret we have achieved, for any
given $C_{1,k}$ the measure of the set of $x_0$ where the goal is not reached
must be correspondingly small. Furthermore, \eqref{eq:goalxicondition}
essentially tells us that the worst-case effort to reach the goal is given for
$x_0$ chosen at the vicinity of $\hat{x} = -\frac{\uppi}{6} \approx -0.524$.
This motivates us to consider $\min_{x_0} \max_\pi R$, which we find
approximately at $x_0 = \hat{x}$, and call it the \emph{analytical worst-case
policy} $\piana$.

Note that for $x_0 = \hat{x}$ we have $U'_g$ being zero and hence no action is
ever applied. An arbitrarily small initial action has to be applied to
\enquote{bootstrap} the motion. To overcome this hurdle, also in the presence
of numerical inaccuracies, we apply a minimum action $\alpha_\text{boot}$ in
a vicinity of $\hat{x}$. This leads to the following analytical worst-case
policy
\begin{align}
  \label{eq:poly_incl_worst}
  \piana(x, v) = \sign(v) \cdot \max(C(x, v) |v|, \alpha_\text{boot}(x, v))
\end{align}
with $C(x, v) = 4.8358$ in the phase~2 area of the state space and $C(x, v)=
4.3346$ otherwise. We heuristically set $\alpha_\text{boot}(x, v) = 0.1$ for
$|x - \hat{x}| \le 0.01$ and zero otherwise. Note that the concrete values for
$\alpha_\text{boot}$ have small impact on $\ell$ given that only a little
fraction of the state space (cf.~\cref{fig:policycomparison}) is affected and
the squared actions in $\ell$ are small. Hence, we refrain from further
discussions in this paper.
For $\piana$ we achieve a return $R$ of $\numrange{99.15}{99.52}$ for $x_0 \in
[-0.6, -0.4]$, see \cref{tab:evalpolicies} later.

Although $\piana$ achieves a return close to the upper bound, we can still ask
for a possible gap to the original discrete formulation. For this reason we set
up a discrete optimization task using the \emph{fmincon} optimizer of Matlab
R2024b and confirmed that no better solution could be found. (For the optimizer
to converge, it needed to be initialized based on the continuous optimum.)
See \cref{sec:apx-optimality-discrete-problem} for details.

Based on our analysis we propose two variants of the Mountain Car problem: By
moving the left wall $\xmin$ more to the left, at some point this would change
the optimal policy from two- to one-phase trajectories. Similarly, the
constraint $\vmax$ of $0.07$ can currently be ignored, but reducing it to
$0.05$ would make this constraint kick in and increase the richness of this
benchmark task.

\paragraph{Regret of the SOTA on Mountain Car}
\label{sec:regret-on-sota}

The analytical worst-case policy $\piana$ has a return between \num{99.15} and
\num{99.52}, and hence is quite close to the trivial upper bound of \num{100}.
\Cref{tab:evalpolicies} lists the leading agents from RL Baselines3 Zoo. The
\gls{ars} agent leads with a return $R$ between \num{92.51} and \num{97.41} on
different starting points $x_0$. The mean regret $r$ of \gls{ars} is therefore
\num{2.72} and appears surprisingly high, not speaking of \gls{ppo} with a regret of
\num{5.48}.

Possible explanations could be limitations in (i) the exploration phase, (ii)
the \gls{rl} algorithms or (iii) the (parametrized) policy space.
From \cref{sec:analyticalsolution} we infer that a goal-reaching agent necessarily
implements some form of \enquote{spiraling} trajectories in state space qua
dynamics of the task, also cf.~\cref{fig:policycomparison}: The state space is
necessarily explored well, once the goal is reached.
In \gls{rl} Baselines3 Zoo, we find a large diversity of algorithms, trained
with proper hyper parameter tuning and sufficient training time. On the other
hand, in \cite{Rajeswaran2017} it was demonstrated that for many well-known
continuous control tasks, simple policies are beneficial to avoid overfitting
models, yet Mountain Car was not investigated. Beyond the study
\cite{Rajeswaran2017}, we simply see from \cref{eq:poly_incl_worst} and the
illustration in \cref{fig:policycomparison} that simple approximators should
suffice.
Hence, we ran \gls{ppo} with smaller neural nets\footnote{Default is two layers of
size 64, i.e., $[64, 64]$. We also ran it with $[32, 32], [16, 16], [64], [32]$
and $[16]$.}. However, this led to degraded performance, see
\cref{sec:apx-ppo-on-mlp-mountaincar} for details. This motivated us to look at
fundamentally different approximator models from first principles.

\section{Chebyshev Policies}

\subsection{Multi-Variate Chebyshev Polynomials}

The tameness of the optimal solution, as in \cref{eq:poly_incl_worst}, appears
to be typical for low-dimensional control tasks, given the general nature of
the underlying \gls{ode}.
We put this principle first: A class of functions that are universal, versatile
and efficient at sampling the space of continuous policies. Polynomials are
versatile and an orthogonal basis spanning a dense subset of the continuous
policy space provides us with universality and efficiency.

This motivated us to generalize Chebyshev polynomials to a multi-variate
setting $\nR^n \to \nR^m$ and provide them as a parametrized model in auto-diff
frameworks like PyTorch as drop-in replacements for neural nets for stochastic
policies. Without loss of generality, we consider $m=1$, since we can setup
a multi-variate polynomial for each factor $\nR$ of the co-domain $\nR^m$.

For $(x_1, \dots, x_n) \in \nR^n$ the degree of the monomial $\prod_{i=0}^n
x_i^{d_i}$ is defined in literature as $\deg f = \sum_i d_i$. A polynomial is
a linear combination of monomials and its degree is the maximum degree of its
monomials. For our purpose it will be convenient to introduce the notion of the
\emph{max-degree} $\deg^* f = \max_i d_i$ of a monomial and polynomial. Note
that the space of multi-variate polynomials $f$ with  $\deg^* f \le d$ forms
a vector space, which we denote by $P^n_d$. Also $\{\prod_{i=0}^n x_i^{d_i}
\colon 0 \le d_i \le d \}$ forms a basis of $P^n_d$, which we call the
\emph{power basis}, and hence $\dim P^n_d = (d+1)^n$.

Let $T_k \colon [-1, 1] \to \mathbb{R} \colon x \mapsto \cos(k \cdot
\arccos(x))$ denote the $k$-th Chebyshev polynomial of the first kind, having
$\deg T_k = k$.
Let us denote by $\mathcal{L}^2_w([-1,1])$ the Hilbert space of square-integrable
functions $[-1, 1] \to \nR$ with the weighted inner product
$
  \langle f,g \rangle_w = \int_{[-1,1]^n} f(x) g(x) \; w(x) \mathrm{d}x,
  $
and the weight function $w(x) = (1-x^2)^{-1/2}$. It is well known that the
$T_0, \dots, T_d$ form an orthonormal basis for $P_d^1$ in
$\mathcal{L}^2_w([-1,1])$, making them favorable in approximation theory,
together with the property that their minima and maxima in $[-1,1]$ have
absolute value $1$.
Next, we generalize to $n$-variate Chebyshev polynomials and refer to
\cite{Zeller1966} for an early work on multi-variate Chebyshev polynomials and
to \cite{Dressler2024} for a more recent work on approaches for this
generalization. In this paper, we use
\begin{align}
  T_{d_1, \dots, d_n}(x_1, \dots, x_n) = \prod_i T_{d_i}(x_i)
\end{align}
and note that $\deg^* T_{d_1, \dots, d_n} = \max_i d_i$.
It is easy to check that the weighted orthonormality generalizes to $\langle
T_{d_1, \dots, d_n}, T_{u_1, \dots, u_n} \rangle_w = \prod_i \delta_{d_i,u_i}$
over $[-1,1]^n$, where $\delta_{d_i,u_i}$ denotes the Kronecker delta and the
weight function is generalized to $w(x_1, \dots, x_n) = \prod_{i=1}^n w(x_i)$.
The $n$-variate Chebyshev polynomials also have their absolute extreme values
at $1$.
Note that there are $(d+1)^n$ linearly independent $n$-variate Chebyshev
polynomials of max-degree at most $d$, and hence they form an orthonormal basis
of $P^n_d$ again. That is, given a function $f \colon [-1,1]^n \to \mathbb{R}$
we can uniquely approximate $f$ by an element in $P^n_d$ via a linear
combination
\begin{align}
  f \approx \sum_{i_1, \dots, i_n = 0, \dots, 0}^{d, \dots, d} \theta_{i_1, \dots, i_n} T_{i_1, \dots, i_n}.
\end{align}
The coefficients $\theta_{i_1, \dots, i_n}$ could be determined by the
previously explained inner product directly, but we implemented this model type
in PyTorch, which allows us to facilitate them in a learning loop.

\subsection{Chebyshev Polynomials for Stochastic Policies}

We use multi-variate Chebyshev polynomials to form stochastic policies, i.e.,
in a given state $s$ we have a parametrized distribution $\pi_\theta(s)$ over
the action space, with a parameter vector $\theta$, and we draw an action
$\alpha \sim \pi(s)$. A typical implementation, like for the Mountain Car
problem, is to set $\pi_\theta(s) = \mathcal{N}(\mu_\theta(s),
\sigma_\theta(s))$, i.e., a normal distribution where the mean and standard
deviation depend on the state. The parametrized models $\mu_\theta$ and
$\sigma_\theta$ are usually implemented as a neural net of some architecture,
translating $s \mapsto (\mu(s), \sigma(s))$, and $\theta$ would be the joint
parameter vector of both nets.

By \emph{Chebyshev policies} we mean that $s \mapsto (\mu(s), \sigma(s))$ is
modeled by two polynomials of max-degree $d$ using multi-variate Chebyshev
polynomials as basis, i.e.,
\begin{align}
  \label{eq:cheby-mu}
  \mu(s)    = \sum_{i_1, \dots, i_n = 0, \dots, 0}^{d, \dots, d}
            \theta^{(\mu)}_{i_1, \dots, i_n} T_{i_1, \dots, i_n}(s)
\end{align}
and likewise for $\sigma(s)$ with $\theta^{(\sigma)}_{i_1, \dots, i_n}$.
This setup makes Chebyshev policies available to all sort of policy gradient
algorithms, e.g., classical REINFORCE, but also modern algorithms like \gls{trpo} and
\gls{ppo}.
To this end, we implemented multi-variate Chebyshev polynomials in PyTorch in
a similar manner as \cite{Waclawek2024} did for the univariate case.
In case of \gls{ppo}, we also model the critic $v_\pi(s)$ by a Chebyshev polynomial, and
hence the joint parameter vector encompasses three multi-variate polynomials.
Vice versa, for \gls{ars} there is no $\sigma$ and we only have to train $\mu$.

Our practical evaluation revealed that choosing a lower max-degree $d \le 3$
for $\sigma$ was beneficial for the training dynamics. The $\sigma$-polynomial
is initialized to $1$
and all other polynomials are initialized with small random coefficients in the
range $\pm 10^{-3}$.

\section{Evaluation}
\label{sec:evaluation}

All experimental results were created using PyTorch version $2.4.0$, Gymnasium
$1.0.0$, and Stable Baselines3 with RL Baselines3 Zoo versions
$2.4.0$. 
The experiments discussed in the following were conducted on the
\emph{MountainCarContinuous-v0} environment in Gymnasium.

\begin{table}[b]
  \centering
  \caption{Performance on Mountain Car. Mean return $\overline{R}$ (regret
  $r$), min and max return, mean time $t_*$ to goal and $\|\pi - \piana\|_2$.}
  \label{tab:evalpolicies}
  \small
  \begin{tabular}{llccc}
    \toprule
    Policy~$\pi$         & $\overline{R}\uparrow$ ($r\downarrow$) & min $R$ -- max $R$ & $t_*\uparrow$ & $\|.\|_2\downarrow$ \\ \midrule
    $\piana$           & 99.39                                  & 99.15   -- 99.52   & 769           &  \\ \midrule
    \textsc{CH-3-ARS}  & 98.95 (0.44)                           & 98.74   -- 99.11   & 471           & 0.152 \\
    \textsc{CH-3-REI}  & 98.62 (0.77)                           & 98.31   -- 98.89   & 396           & 0.068 \\
    \textsc{CH-3-PPO}  & 98.10 (1.29)                           & 97.61   -- 98.42   & 469           & 0.087 \\
    \textsc{ARS}       & 96.67 (2.72)                           & 92.51   -- 97.42   & 239           & 0.211 \\
    \textsc{SAC}       & 94.61 (4.78)                           & 89.70   -- 95.77   & 106           & 0.317 \\
    \textsc{PPO}       & 93.91 (5.48)                           & 90.86   -- 95.23   & 298           & 0.273 \\
    \bottomrule
  \end{tabular}
\end{table}

\paragraph{Do Chebyshev Policies Improve upon MLP Policies?}
\label{sec:eval-cheby-beats-mlp}

We evaluate agents as follows: We choose \num{100} evenly spaced $x_0$ from
$[-0.6, -0.4]$ and record the achieved returns $R$. Note that the mean return
$\overline{R}$ is an estimator for the expected return $\mathbb{E}_{x_0 \sim
\mathcal{U}([-0.6, -0.4])}(R)$.
Further details on the training and evaluation protocol can be found in
\cref{sec:details-eval-mountaincar}.

For neural policies we took the pretrained RL Baseline3 Zoo agents on the
MountainCarContinuous-v0 problem \cite{Raffin2024}. The best performers were
trained by \gls{ars}, \gls{sac} and \gls{ppo}. We furthermore trained Chebyshev policies of
max-degree 3 with \gls{ars}, \gls{ppo} and the classical REINFORCE algorithm, which we call
\textsc{CH-3-ARS}, \textsc{CH-3-PPO} and \textsc{CH-3-REI}. In
\cref{tab:evalpolicies} we report on the results and discuss them as follows.
(Further details can be found in \cref{sec:details-eval-mountaincar}.)

First of all, every Chebyshev policy trained by different algorithms
significantly outperforms the neural net policies by a large margin in terms of
regret. The best neural policy (\textsc{ARS}) achieved a regret of \num{2.72},
while \textsc{CH-3-ARS} improved it to \num{0.44}, which is a factor of \num{6.18}.
We note that for both models \textsc{ARS} leads to the best performer. But also
with \gls{ppo}, Chebyshev policies improve the regret to \num{1.29} from \num{5.48},
which is again a factor of \num{4.24}.

We actually tried to find a bad performing Chebyshev policy by implementing the
classic REINFORCE algorithm. Interestingly, while REINFORCE fails in obtaining
any goal-reaching neural policy (see \cref{sec:apx-reinforce-fails-mlp-mountaincar}), it
succeeds with Chebyshev policies, actually outperforming all other neural net
policies by a large margin, namely improving the regret to \num{0.77} compared
to \num{2.72} of \textsc{ARS}, which is a remarkable factor of $3.53$.

\begin{figure*}[t]
  \centering
  \includegraphics[width=\textwidth,trim={3.6mm 3mm 4mm 2.5mm},clip]{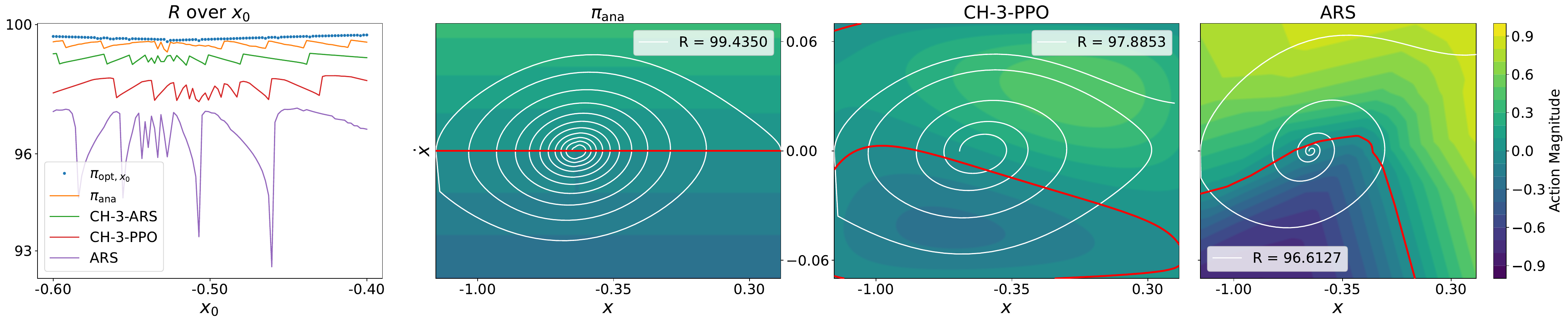}
  \caption{The left figure plots $R$ for each policy. The right figures plot
    the actions of $\piana$, \textsc{CH-3-PPO} and \gls{ars} over the state space, the
  zero-actions in red and in white a trajectory from $x_0 = -0.55$.}
  \label{fig:policycomparison}
\end{figure*}

\paragraph{How do the Control Strategies Compare?}
\label{sec:control-strategies-compare}

In knowledge of the optimal control, we can investigate deeper why and when the
Chebyshev policies outperform the neural policies. In
\cref{fig:policycomparison}, in the left subfigure, we plot the return of \gls{ars}
(best performing neural policy) and \textsc{CH-3-PPO} (worst performing
\textsc{CH-3}) over the different start positions $x_0$, and hence attain
a deterministic point of view. Once we fix $x_0$, we could ask what the optimal
control would be given $x_0$, which we denote by $\pioptxzero$.
Interestingly, the return of \gls{ars} is highly sensitive to $x_0$, dropping
down to \num{92.51}. In contrast, Chebyshev policies give a stable performance
relative to $\piana$.

In the three right subfigures, we plot the actions over the state space and
a sample trajectory. Here we can observe a key deficiency of \textsc{ARS}: it
outputs positive action for negative $\dot x$ in a large area of the state
space, which counteracts the dynamics of Mountain Car.
\textsc{ARS} outputs large actions compared to $\piana$, causing the car to
reach the goal in $t_* = 239$ steps on average, lowering the return and
increasing excess goal velocity, see \cref{tab:evalpolicies}.
In contrast, \textsc{CH-3-PPO} is closer to $\piana$ than \gls{ars}, which we can
summarize by looking at $\|\pi - \piana\|_2$ over the state space $[-1.2, 0.45]
\times [-0.07, 0.07]$ for different polices $\pi$. See
\cref{sec:apx-indepth-policies-mountaincar} for more details.

\paragraph{Do Chebyshev Policies Perform Well on Other Tasks?}
\label{sec:eval-cheby-other-tasks}

Although Mountain Car forms a dynamical system typical for low-dimensional
control tasks in engineering, we still may ask how a different dimensionality
of the state space or a change of the underlying system dynamics
affects the performance of Chebyshev policies. We therefore also evaluated
Chebyshev policies on the \emph{Pendulum-v1} environment in Gymnasium.
%
%
Its (normalized) state space $S^1 \times [-1,1] \hookrightarrow \nR^3$ is
3-dimensional and formed by an angular position as a Cartesian point on the
unit circle $S^1$ and the angular velocity. The action is a scalar torque
applied to the pendulum.
We pick $50 \times 50$ equidistant start states from $S^1 \times [-1, 1]$ and
report on the average return in \cref{tab:evalenvironments}. The return
captures a slightly different underlying objective than Mountain Car.
We report on the results in \cref{tab:evalenvironments} and observe that for
both \gls{ars} and \gls{ppo}, Chebyshev policies again clearly outperform the pretrained RL
Baselines3 Zoo agents. For the Chebyshev policies, our detailed evaluation
shows that max-degree 6 worked best for \gls{ars} and max-degree 5 for \gls{ppo}.

\begin{table}[b]
  \centering
  \caption{Average return on Mountain Car (MC), Pendulum and Aero~2 environments in
  simulation and real world.}
  \label{tab:evalenvironments}
  \small
  \begin{tabular}{llrrrr}
    \toprule
                    & MC    & Pendulum & Aero~2 sim & Aero~2 real \\ \midrule
    \textsc{CH-ARS} & 98.95 & -150.8   & -125.2     & -164.2      \\
    \textsc{ARS}    & 96.67 & -218.3   & -721.8     & -718.4      \\ \midrule
    \textsc{CH-PPO} & 98.10 & -162.8   & -49.2      & -55.8       \\
    \textsc{PPO}    & 93.91 & -176.2   & -84.6      & -182.0      \\
    \bottomrule
  \end{tabular}
\end{table}

The transition from simulation to real-world environments comes at well-known
\gls{rl} challenges, which motivated us to evaluate Chebyshev policies on the
Quanser Aero~2 helicopter testbed. 
Following \cite{Schaefer2024a}, we have a 3-dimensional state space,
1-dimensional action space and significant non-linear motion dynamics. In
\cref{tab:evalenvironments} we report returns gathered from an Aero~2
simulation and real-world operation of the same policies and the task is to
follow a target pitch signal.
We again see that all Chebyshev policies significantly outperform their \gls{mlp}
counterparts. A closer look reveals that \textsc{ARS} fails to follow the
target signal. Furthermore, \textsc{PPO} resorts to a bang-bang control,
leading to excessive power consumption and motor heat in our experiments.

When investigating the performance of the policies on the real system, we see
that (i) all Chebyshev policies outperform all \gls{mlp} policies and (ii) the
Chebyshev policies better retain the simulation performance. The appendix
provides more details on both environments in \cref{sec:apx-further-envs}.

\paragraph{Limitations}
\label{sec:eval-cheby-dimensionality}

A limitation of Chebyshev policies can be found in the combinatorial growth of
the number $(d+1)^n$ of basis polynomials. Hence, for large numbers, this
growth leads to high computational demands and at some point it will impair
return, training stability and numerical stability.
Although many industrial control tasks deal with small $n$, modern control
challenges, like in humanoid robotics, deal with $n$ in the tens or hundreds.
To get a preliminary impression on computational costs, training stability and
numerical limitations for $1 \le d \le 50$ at $n=2$ on the Mountain Car
environment, we conducted experiments in \cref{sec:apx-dimensionality}.

Another limitation can rise from the uniform convergence of Chebyshev policies.
While we prefer uniform convergence in general, it can be a weakness when the
state space should not be treated equal everywhere. In contrast, the
hierarchical architecture of \glspl{mlp} allows to specifically focus on
important regions and concentrate model capacity there.
This allows \glspl{mlp} to implement very different behavior at different
locations, like high variety of actions at one part of the state space and low
variety at other parts.

\section{Conclusion}

Reinforcement Learning is broadly applied, from particle accelerators and
robotic control to large language models, yet we also face long-standing gaps
in \gls{rl} theory.
After 36 years, we analytically solved Mountain Car to optimality. It led to
surprising insights concerning the high regret of modern \gls{rl} agents. This
raises pressing research questions of general nature, which we underpinned with
rigor evaluation on different environments.

We introduced a new class of Chebyshev policies from ground up as drop-in
replacements of \gls{mlp} policies from first principle: forming a universal
subspace of the continuous policy space.
They reduce the regret \num{6.18}-fold, come close to optimality, and among three
algorithms (\gls{ars}, REINFORCE, \gls{ppo}), Chebyshev policies consistently
and significantly outperformed \gls{mlp} policies, while having $277\times$
less model parameters\footnote{When compared to a \gls{mlp} with layer sizes 2,
64, 64, and 1, which has $2 \cdot 65 + 64 \cdot 65 + 65 = 4355$ parameters.}.
Using a $n$-variate Horner scheme, max-degree $d$ polynomials can be evaluated
with $(d+1)^n$ multiplications and additions, fostering real-time performance,
especially for embedded devices without neural accelerators.
We also find that on the Pendulum environment and the Aero~2 real-world
testbed, Chebyshev policies consistently improve upon \gls{mlp} policies,
improve the sim-to-real transfer and dynamical stability.

The uniform convergence of Chebyshev polynomials might also be a limitation
when compared to \glspl{mlp}: Especially with ReLU activation and multiple
layers, a \gls{mlp} can erect a policy that has a hierarchical structure that
adapts to different regions of the state space. Optimal controls in resemblance
to bang-bang or sliding-mode controllers are therefore better achieved by
\glspl{mlp}.

We anticipate numerical limitations at higher polynomial degrees and dimension,
and hence our findings are currently targeted at low-dimensional control tasks.
However, we suggest a future research direction on a hybrid approach to combine
\glspl{mlp} with Chebyshev representation layers, where \glspl{mlp} and
Chebyshev policies form special cases.
Finally, we hope that these results encourage deeper integration of analytical
insight into \gls{rl}, as such understanding may unlock further real-world
impact.

\section*{Acknowledgment}

\ifdefined\isaccepted
  The financial support by the Christian Doppler Research Association, the
  Austrian Federal Ministry for Digital and Economic Affairs, the European
  Interreg Österreich-Bayern project BA0100172 AI4GREEN and the WISS project of
  the Federal State of Salzburg is gratefully acknowledged.
\else
  Left for initial submission.
\fi

\section*{Impact Statement}




This research contributes to the development of transparent and computationally
efficient \gls{rl} methods. By replacing neural black-box function
approximators with analytic polynomial models, it supports decision-making and
control systems that are potentially easier to audit, verify, and trust. These
qualities are essential for deploying \gls{rl} in safety-critical domains such
as robotics, energy management, and autonomous systems.

\bibliography{mountain-car-analytic-polynomial-rl-approximator}

@inproceedings{Schaefer2024a,
  title         = {Comparison of Model Predictive Control and Proximal Policy Optimization for a {1-DOF} Helicopter System},
  booktitle     = {2024 {IEEE} 22nd {IEEE} International Conference on Industrial Informatics {(INDIN'24)}},
  author        = {Sch{\"a}fer, Georg and Rehrl, Jakob and Huber, Stefan and Hirlaender, Simon},
  year          = {2024},
  publisher     = {{IEEE}},
  address       = {{Beijing, China}},
  doi           = {10.1109/INDIN58382.2024.10774357}
}

@inproceedings{Schaefer2024b,
  title         = {{Python-Based Reinforcement Learning on Simulink Models}},
  author        = {Sch{\"a}fer, Georg and Schirl, Maximilian and Rehrl, Jakob and Huber, Stefan and Hirlaender, Simon},
  year          = {2024},
  booktitle     = {11th International Conference on Soft Methods in Probability and Statistics {(SMPS 2024)}},
  address       = {Salzburg, Austria},
  note          = {accepted},
  doi           = {10.1007/978-3-031-65993-5_55}
}

@misc{Gym-mountaincarcontinuous,
  key           = {Gym-mcc},
  title         = {{Gymnasium: Mountain Car Continuous}},
  url           = {https://gymnasium.farama.org/environments/classic_control/mountain_car_continuous/},
  urldate       = {21.01.2026},
  year          = 2024
}

@misc{Gym-leaderbord,
  key           = {Gym-lb},
  title         = {{OpenAI Gym}: Leaderboard},
  url           = {https://github.com/openai/gym/wiki/Leaderboard},
  urldate       = {21.01.2026},
  year          = 2024
}

@misc{Raffin2024,
  author        = {Raffin, Antonin and Simonini, Thomas and Ernestus, Maximilian and Gallou\'{e}dec, Quentin},
  title         = {Reinforcement Learning models trained using {Stable Baselines3} and the {RL Zoo}.},
  year          = {2024},
  publisher     = {Hugging Face},
  howpublished  = {\url{https://huggingface.co/sb3}},
  urldate       = {21.01.2026}
}

@misc{Mania2018,
  title         = {Simple Random Search Provides a Competitive Approach to Reinforcement Learning},
  author        = {Horia Mania and Aurelia Guy and Benjamin Recht},
  year          = {2018},
  eprint        = {1803.07055},
  archiveprefix = {arXiv},
  url           = {https://arxiv.org/abs/1803.07055}
}

@phdthesis{Moore1990,
  title         = {Efficient memory-based learning for robot control},
  author        = {Moore, Andrew William},
  school        = {University of Cambridge},
  year          = {1990},
  url           = {https://www.cl.cam.ac.uk/techreports/UCAM-CL-TR-209.pdf}
}

@inproceedings{Rajeswaran2017,
  author        = {Rajeswaran, Aravind and Lowrey, Kendall and Todorov, Emanuel V. and Kakade, Sham M},
  booktitle     = {Advances in Neural Information Processing Systems {(NIPS 2017)}},
  publisher     = {Curran Associates, Inc.},
  title         = {Towards Generalization and Simplicity in Continuous Control},
  volume        = {30},
  year          = {2017}
}

@inproceedings{Waclawek2024,
  title         = {Machine Learning Optimized Orthogonal Basis Piecewise Polynomial Approximation},
  author        = {Waclawek, Hannes and Huber, Stefan},
  year          = 2024,
  month         = jun,
  booktitle     = {Learning and Intelligent Optimization {(LION 18)}},
  series        = {Lecture Notes in Computer Science},
  address       = {Ischia, Italy},
  publisher     = {Springer Cham},
  doi           = {10.1007/978-3-031-75623-8_33}
}

@book{Koenigsberger2000,
  title         = {Analyis 1},
  author        = {K\"{o}nigsberger, Konrad},
  edition       = 6,
  year          = 2004,
  publisher     = {Springer},
  isbn          = {978-3-642-18490-1}
}

@book{Sutton2018,
  author        = {Sutton, Richard S. and Barto, Andrew G.},
  edition       = {Second},
  publisher     = {The MIT Press},
  title         = {Reinforcement Learning: An Introduction},
  url           = {http://incompleteideas.net/book/the-book-2nd.html},
  year          = {2018}
}

@inproceedings{Grande2014,
  title         = {Sample Efficient Reinforcement Learning with {Gaussian} Processes},
  author        = {Grande, Robert and Walsh, Thomas and How, Jonathan},
  booktitle     = {Proceedings of the 31st International Conference on Machine Learning},
  pages         = {1332--1340},
  year          = {2014},
  volume        = {32},
  number        = {2},
  series        = {Proceedings of Machine Learning Research},
  address       = {Bejing, China},
  month         = jun,
  publisher     = {PMLR}
}

@inproceedings{Schulman2015,
  title         = {Trust Region Policy Optimization},
  author        = {Schulman, John and Levine, Sergey and Abbeel, Pieter and Jordan, Michael and Moritz, Philipp},
  booktitle     = {Proceedings of the 32nd International Conference on Machine Learning {(ICML 2015)}},
  pages         = {1889--1897},
  year          = 2015,
  editor        = {Bach, Francis and Blei, David},
  volume        = {37},
  series        = {Proceedings of Machine Learning Research},
  address       = {Lille, France},
  month         = jul
}

@inproceedings{Haarnoja2018,
  title         = {Soft Actor-Critic: Off-Policy Maximum Entropy Deep Reinforcement Learning with a Stochastic Actor},
  author        = {Haarnoja, Tuomas and Zhou, Aurick and Abbeel, Pieter and Levine, Sergey},
  booktitle     = {Proceedings of the 35th International Conference on Machine Learning {(ICML 2018)}},
  pages         = {1861--1870},
  year          = 2018,
  volume        = {80},
  series        = {Proceedings of Machine Learning Research},
  month         = jul,
  publisher     = {PMLR}
}

@misc{Schulman2017,
  title         = {Proximal Policy Optimization Algorithms},
  author        = {John Schulman and Filip Wolski and Prafulla Dhariwal and Alec Radford and Oleg Klimov},
  year          = {2017},
  eprint        = {1707.06347},
  archiveprefix = {arXiv},
  primaryclass  = {cs.LG},
  url           = {https://arxiv.org/abs/1707.06347}
}

@article{Zeller1966,
  author        = {Zeller, Karl and Ehlich, Hartmut},
  journal       = {Mathematische Zeitschrift},
  pages         = {142--143},
  title         = {{Cebysev-Polynome} in mehreren {Ver\"{a}nderlichen}.},
  url           = {http://eudml.org/doc/170581},
  volume        = {93},
  year          = {1966}
}

@misc{Dressler2024,
  title         = {Optimization-Aided Construction of Multivariate {Chebyshev} Polynomials},
  author        = {Dressler, Mareike and Foucart, Simon and Joldes, Mioara and de Klerk, Etienne and Lasserre, Jean Bernard and Xu, Yuan},
  year          = 2024,
  eprint        = {2405.10438},
  archiveprefix = {arXiv},
  primaryclass  = {math.OC},
  url           = {https://arxiv.org/abs/2405.10438}
}

@article{DulacArnold2021,
  author        = {Dulac-Arnold, Gabriel and Levine, Nir and Mankowitz, Daniel J. and Li, Jerry and Paduraru, Cosmin and Gowal, Sven and Hester, Todd},
  title         = {Challenges of real-world reinforcement learning: definitions, benchmarks and analysis},
  journal       = {Machine Learning},
  year          = 2021,
  volume        = 110,
  number        = 9,
  pages         = {2419--2468},
  doi           = {10.1007/s10994-021-05961-4}
}

@book{HandFinch1998,
  author        = {Hand, Louis N. and Finch, Janet D.},
  title         = {Analytical Mechanics},
  publisher     = {Cambridge University Press},
  year          = {1998},
  address       = {Cambridge},
  isbn          = {978-1139639514}
}

@misc{Eysenbach2023,
  title         = {A Connection between One-Step Regularization and Critic Regularization in Reinforcement Learning},
  author        = {Eysenbach, Benjamin and Geist, Matthieu and Levine, Sergey and Salakhutdinov, Ruslan},
  year          = 2023,
  month         = jul,
  booktitle     = {Proceedings of the 40th International Conference on Machine Learning {(ICML 2023)}},
  page          = {9485--9507}
}

@inproceedings{Tang2025,
  author        = {Tang, Chen and Abbatematteo, Ben and Hu, Jiaheng and Chandra, Rohan and Mart\'{\i}n-Mart\'{\i}n, Roberto and Stone, Peter},
  title         = {Deep reinforcement learning for robotics: a survey of real-world successes},
  year          = {2025},
  isbn          = {978-1-57735-897-8},
  publisher     = {AAAI Press},
  doi           = {10.1609/aaai.v39i27.35095},
  booktitle     = {Proceedings of the Thirty-Ninth AAAI Conference on Artificial Intelligence and Thirty-Seventh Conference on Innovative Applications of Artificial Intelligence and Fifteenth Symposium on Educational Advances in Artificial Intelligence},
  numpages      = {5},
  url           = {https://arxiv.org/abs/2408.03539}
}

@misc{Gazi2026,
  title         = {Statistical Reinforcement Learning in the Real World: A Survey of Challenges and Future Directions},
  author        = {Asim H. Gazi and Yongyi Guo and Daiqi Gao and Ziping Xu and Kelly W. Zhang and Susan A. Murphy},
  year          = {2026},
  url           = {https://arxiv.org/abs/2601.15353}
}

@misc{UngerSchaefer2026,
  title         = {Code Repository - {Chebyshev} Policies and the {Mountain Car} Problem: Reinforcement Learning for Low-Dimensional Control Tasks},
  author        = {Huber, Stefan and Unger, Hannes and Sch{\"a}fer, Georg and Rehrl, Jakob},
  url           = {https://github.com/JRC-ISIA/paper-2026-chebyshev-policies-low-dimensional-control-tasks},
  urldate       = {18.05.2026},
  year          = 2026
}
\bibliographystyle{icml2026}

\appendix
\onecolumn
\section{Details of the Analytical Solution to Mountain Car}
\subsection{Additional Proofs}

\subsubsection{\Cref{lem:oscperiod}}

We investigate the oscillation period depending on the start position $x_0 =
x(0)$ at rest, i.e, $\dot x(0) = 0$, when no action is applied. We recall that
$x$ fulfills the differential equation
\begin{align}
  \ddot x = -U(x) = -g \cos(3x).
\end{align}
By the symmetry of $U$ we have that $U(x_0) = U(\hat{x}_0)$, where $\hat{x}_0$
denoted the reflection of $x_0$ at $\hat{x}$ and $\hat{x}$ denoted the minimum
of $U$. That is, with no action the car would oscillate in the interval $[x_0,
\hat{x}_0]$.

We bring the differential equation into a standard form by the substitution
$\varphi = 3x + \frac{\pi}{2}$, which yields
\begin{align}
  \ddot \varphi = -3g \sin(\varphi).
\end{align}

Let us define $\alpha = 3x_0 + \frac{\pi}{2}$. Then $\varphi$ oscillates in the
interval $[-\alpha, +\alpha]$.
The period $T$ as in \cref{thm:osc} is then known to yield the elliptic
integral of the first kind, see p.~278 in \cite{Koenigsberger2000}. More
precisely,
\begin{align}
  T =  \frac{4}{\sqrt{3g}} K(k),
\end{align}
where $K$ is the complete elliptic integral of the first kind and $k =
\sin(\frac{\alpha}{2})$.

\subsubsection{\Cref{lem:csvariant}}
\label{sec:apx-proof-csvariant}

This lemma is a variant of Cauchy-Schwarz, which says that $\langle f,g \rangle
\le\|f\| \|g\|$ for elements $f, g$ of a inner-product vector space, and
$\langle f,g \rangle = \|f\| \|g\|$ iff $f$ and $g$ are positive multiple of
each other, i.e., the cosine between $f$ and $g$ being one. So minimizing
$\|f\|$ given $\langle f, \frac{g}{\|g\|} \rangle \le \| f\|$ while $\langle f,g
\rangle=1$ implies $f = g /\|g\|^2$.

A geometric argument is given as follows: The set $\{ f \colon \langle f, g
\rangle = 1 \}$ describes the hyperplane orthogonal to and supported by $g /
\|g\|^2$. Minimizing $\|f\|$ over this hyperplane is equivalent of finding the
orthogonal projection of the origin on this plane, i.e., the point $g /
\|g\|^2$.

\subsubsection{\Cref{thm:optalpha}}
\label{sec:apx-proof-optalpha}

    Assume we have some $\alpha$ that reaches the goal fulfills
    \eqref{eq:goalxicondition} to strict inequality. Then we could simply scale
    $\alpha$ down by a factor, reducing $\ell$, while still reaching the goal, but
    with less excessive kinetic energy. Hence, we can restrict $\alpha$ to
    solve \eqref{eq:goalxicondition} to equality. We set
    \begin{align*}
        f(\xi) = \tilde\alpha(\xi) / \sqrt[4]{-2U(\xi)} \quad\text{and}\quad
        g(\xi) = \amax \sqrt[4]{-2U(\xi)} / \left(\frac{1}{2}v_*^2 + U_g(\xi_*)\right).
    \end{align*}
    Note that $\ell = \|f\|^2$ and minimizing $\ell$ is equivalent to
    minimizing $\|f\|$. Further note that by equality in
    \eqref{eq:goalxicondition} we have $\langle f, g \rangle = 1$.
    %
    From \Cref{lem:csvariant} we conclude $f = g / \|g\|^2$, i.e.,
        $\tilde \alpha(\xi) = C \cdot \sqrt{-2U(\xi)}$
    and therefore
        $\alpha(t) = C \cdot \dot x(t)$
    for some constant $C$ as $\alpha \cdot \dot x \ge 0$.

\subsection{Details on Two-Phase Trajectories}
\label{sec:apx-twophasetrajectories}

Let us discuss the different cases as introduced in
\cref{sec:analysis-twophase} in further detail. We first ignore the constraint
$t_* \le \tmax$. The two phases are formed as follows:

\begin{enumerate}

    \item Phase~1 consists of the first $k-1$ strokes. An excessive velocity
      $\dot x$ when reaching the left wall is eliminated by the inelastic
      bump. Hence, the minimal loss is obtained by choosing $C_{1, k}$ just as
      small such that we reach $\xmin$ at zero velocity while maintaining $k-1$
      strokes in phase~1.

    \item In phase~2 we look for the smallest $C_{2, k}$ such that we reach
      $x_*$ at velocity $v_*$ from state $(\xmin, 0)$ in a single stroke. That
      is, the trajectory in phase~2 is actually independent of $k$ (still
      assuming $t_* \le \tmax$).

\end{enumerate}

Let us denote by $\vwall$ the velocity at which we hit the left wall at
$\xmin$. So far we discussed $\vwall = 0$.
The larger $k$ becomes the larger becomes $t_*$. At some $k$ we would violate
$t_* \le \tmax$. Roughly speaking, if we increase $C_{1, k}$ and $C_{2,k}$ then
we might reach $x_*$ at $t_* \le \tmax$ again, but $\vwall \neq 0$ when we
increase $C_{1,k}$.
However, note that a given $\vwall$ determines the $\ell$-optimal $C_{1,k},
C_{2,k}$ in order to reach $t_* = \tmax$ in this case: $C_{1,k}$ is chosen
small enough to reach the left wall with velocity $\vwall$. This determines the time
$\tau$ at which the wall is hit. Then we choose $C_{2,k}$ small enough to reach
$x_*$ in time $\tmax - \tau$ for phase~2. Hence, when $k$ is large enough such
that reaching $x_*$ at velocity $v_*$ would violate $t_* \le \tmax$, we have to
search for the $\ell$-optimal $\vwall$.

To sum up, for each $k$ we can determine the $\ell$-optimal $C_{1,k}$ and
$C_{2,k}$, yielding the optimal $\ell_k$. Then we exhaustively test all $k$ to
find the optimum.

\subsection{Optimal Policy for A-Priori Known Starting Location}

When the start position $x_0$ is given a priori, this knowledge can be
exploited to improve the optimal control strategy $\piana$ further; we denote
the resulting strategy by $\pioptxzero$.
\Cref{fig:policyheatmap_analytic_trajectory} shows $\pioptxzero$ with $x_0=-0.55$. We can
see that, contrary to $\piana$ depicted in \cref{fig:policycomparison}, its
trajectory reaches the left wall with a velocity close to zero, preventing loss
of excessive velocity due to the inelastic bump. We see that the bootstrapping
action $\alpha_\text{boot}(x, v) = 0.1$ for $|x - \hat{x}| \le 0.01$ extends to
both sides of the plane from the center, showing small impact given
that only a little fraction of the state space is affected.

\begin{figure}[h]
  \centering
  \includegraphics[width=0.7\textwidth]{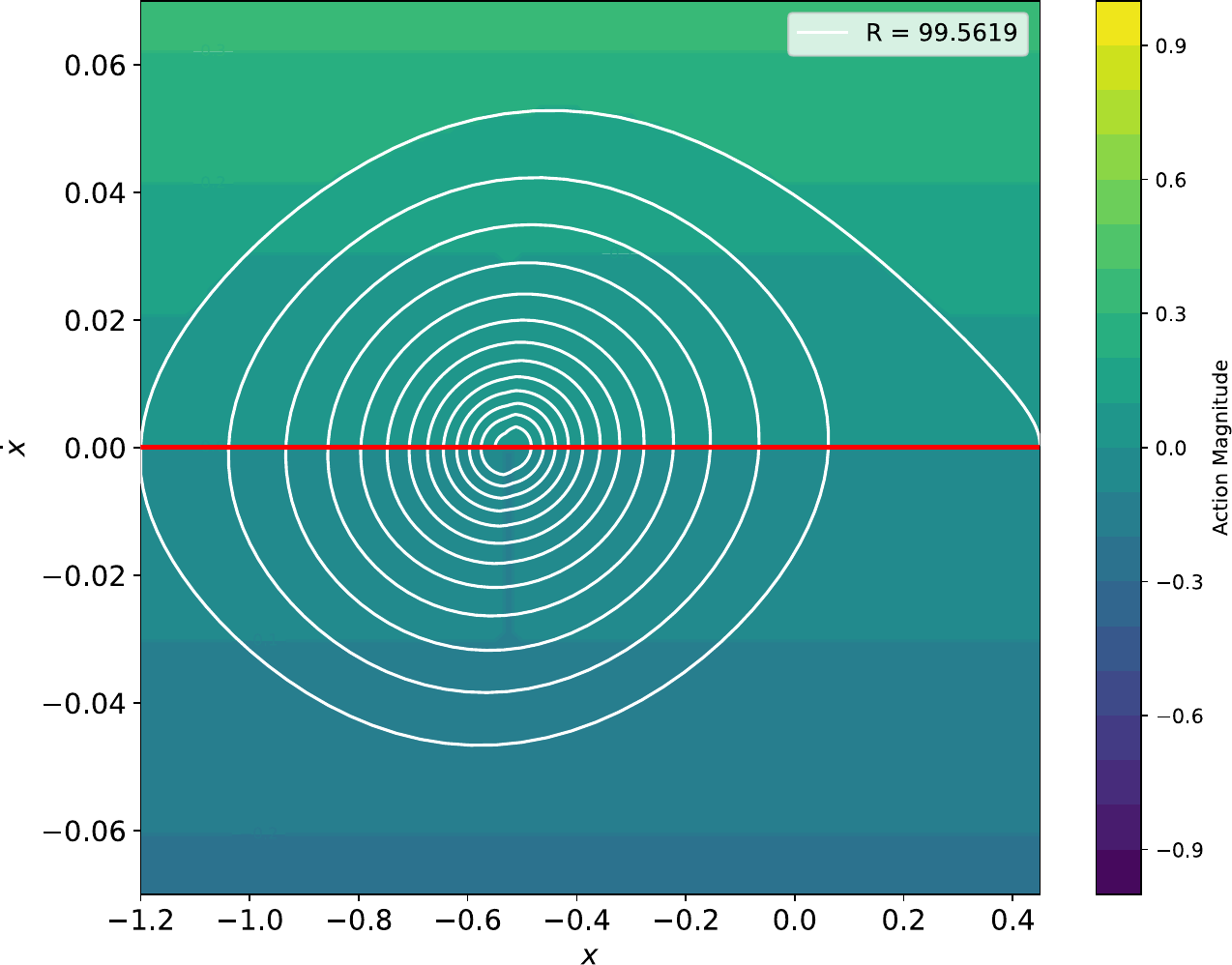}
  \caption{Optimal policy $\pioptxzero$ for $x_0=-0.55$ with $C_{1,k} = 3.2986$ and
  $C_{2,k} = 4.8358$, with $k=25$.}
  \label{fig:policyheatmap_analytic_trajectory}
\end{figure}

\subsection{Unconstrained Experiments}
\label{sec:apx-unconstrainedexp}

Recall our discussion after \cref{thm:optalpha}: When we have a small $C$ then
little action is applied, so for each stroke the potential $U$ is slowly
lowered and the number $k$ of strokes will be high until the goal is reached.
When $C$ is increased the number $k$ is reduced in discrete steps and so is
$\xi_* - \xi_0$.

While $k$ stays constant, however, increasing $C$ slightly increases $\xi_*
- \xi$ as each stroke becomes slightly longer. See more details in
\cref{sec:apx-unconstrainedexp}.

We move the left wall to $x_{\min} = -\frac{2 \uppi}{6}-0.45$ to verify
single-phase trajectories for rising values of the parameter $C$.
\cref{fig:single_phase_c} shows the instance $x_0 = -0.55$. As we increase $C$,
$x_*$ is reached at increased speed $\dot x(t_*)$, increasing $\ell$. Jumps
indicate a change in the stroke count $k$. If $\min_t x(t) = x_{k-1}$ becomes
less than $\xmin$ after a sufficiently large $k$ then these single-phase
solutions are not feasible.

Conversely, the position $x_{\min}$ of the left wall is feasible if and
only if the car can reach the goal in one stroke at $\alpha=1$. If the left wall is
approximately in the interval $[-0.651, 0.310]$ then this problem is infeasible,
and otherwise the Mountain Car can indeed reach the goal (from any start
position).

\begin{figure}[h]
  \centering
  \includegraphics[width=\textwidth]{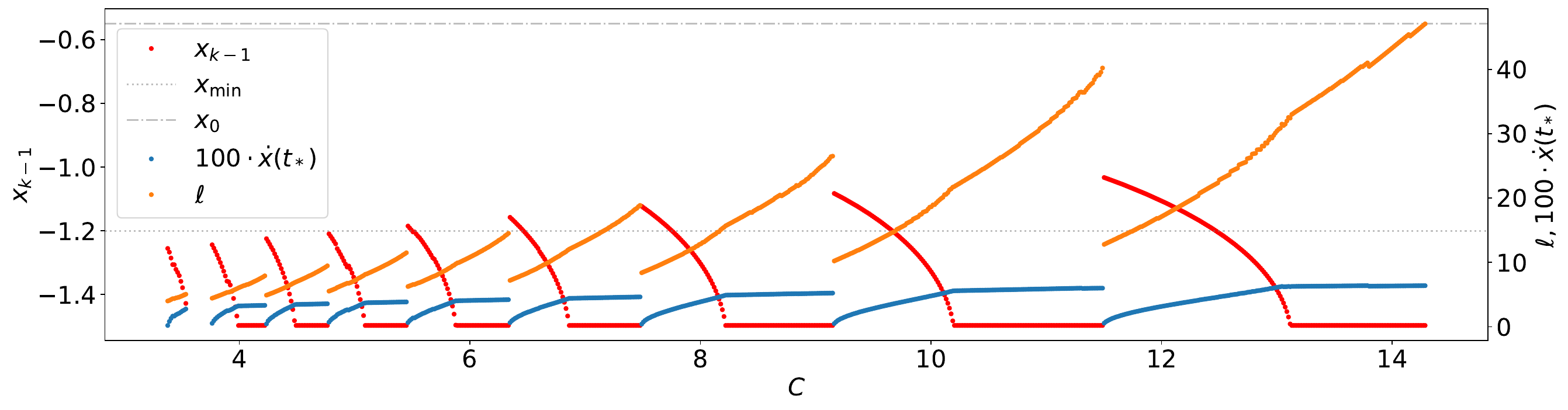}
  \caption{Loss $\ell$ over $C$ for $x_0 = -0.55$, together with $x_{k-1}$ and
  $\dot x(t_*)$ in the unconstrained setting.}
  \label{fig:single_phase_c}
\end{figure}

\subsection{Optimality of the Discrete Control Problem}
\label{sec:apx-optimality-discrete-problem}

To confirm that the continuous-time analytical policy is also the optimal
solution of the discrete-time case, the optimization problem
\begin{mini}
  {\alpha_i,i^*}{\sum_{i=0}^{i^*-1} \alpha_i^2}{}{}
  \addConstraint{x_{i^*} \geq 0.45} \addConstraint{|v_i| \leq 0.07}
  \addConstraint{x_i\geq-1.2} \addConstraint{ |\alpha_i|\leq 1}
  \addConstraint{ i^*\leq 999}
  \label{eq:discopt}
\end{mini}
was solved for the three starting points $x_0=-0.6$, $x_0=-\uppi/6$
and $x_0=-0.4$. The obtained return values only marginally differ in
the third decimal compared to the two-phase analytic solution. Since
the numerically obtained action sequence could be related to  a
local minimum of the objective -- particularly due to the large
number of optimization
variables, the guarantee that a global minimum was found is not given -- ,
another test was executed. The optimization problem was simplified by
only considering a single stroke, which reduces the number of
optimization variables significantly. Starting from $x_0=-0.6$ and
$x_0=-\uppi/6-0.001$, the policy from \cref{thm:optalpha} was applied to
the discrete-time simulation model with an arbitrarily chosen $C=20$.
Using this policy, the car will execute one stroke until it
reaches zero velocity at position $x_e$. This position is then used
as target position $x_i^*$ in the optimization problem~\eqref{eq:discopt}.
This time, the initial guess of the action sequence is zero, i.e., the
solver is not provided with any solution and now it should still find
the same action sequence as given by \cref{thm:optalpha}.

Indeed, the optimizer could slightly improve the loss compared to the
given policy ($x_0=-0.6$: $0.5835$ analytic vs. $0.57246$ optimized;
$x_0=-\uppi/6-0.001$: $9.983\cdot 10^{-5}$ analytic vs. $9.772\cdot 10^{-5}$
optimized). However, the shape of the action sequence is very similar to
the analytically derived policy. 
To get an idea of the effect of discretization, in addition to the
numerical approximation of the
integral of the squared action over time, equation~\eqref{eq:lossspace}
was numerically evaluated, i.e., the loss was computed by integration
along the position. The observed discrepancy between the two
approximations of the integral was larger than the discrepancies
observed between analytical solution of the continuous time
problem compared to the numerical solution of the discrete-time problem.
Therefore, we conclude that the analytically derived policy for
the continuous time case also holds for the discrete-time case,
since the observed differences are smaller than effects originating
from the time discretization.

\section{Details on Chebyshev Policies}
\subsection{Multi-Variate Horner Scheme}

Let $p(x) = a_0 + a_1 x + \cdots + a_d x^d$ denote a polynomial of degree $d$
then the classic Horner scheme factors the $x$ as follows:
\begin{align}
  \label{eq:horner}
  p(x) = \underbrace{\left(\cdots \left((a_d \cdot x + a_{d-1})\cdot x + a_{d-2}\right) \cdots + a_1\right)\cdot x}_{\text{$d$ parentheses}} + a_0
\end{align}
The number of multiplications and additions used is $d$, which is the smallest
possible. There is varying literature on multi-variate generalizations. Since
we use the notion of max-degree in this paper, we briefly provide an analysis
for the number of multiplications and additions for $n$-variate max-degree $d$
polynomials here.

We see an $n$-variate polynomial as a polynomial of polynomials of polynomials
and so on, each nesting level corresponding to one variable. Hence, we apply
the Horner scheme per variable, which is per nesting level. That is, in
\cref{eq:horner}, when $p$ is an $n$-variate polynomial of max-degree $d$ then
$a_i$ are $(n-1)$-variate polynomials, also of max-degree $d$.

\begin{lemma}
  Evaluating an $n$-variate max-degree $d$ polynomial with the multi-variate
  Horner scheme leads to $(d+1)^n - 1$ multiplications and additions.
\end{lemma}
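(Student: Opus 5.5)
We argue by induction on the number of variables $n$, and we read the claim as: the scheme uses $(d+1)^n - 1$ multiplications and $(d+1)^n - 1$ additions (each count separately). Let $M(n)$ and $A(n)$ denote these counts for an $n$-variate polynomial of max-degree $d$. The target is $M(n) = A(n) = (d+1)^n - 1$, with the convention $M(0)=A(0)=0$: a $0$-variate polynomial is a constant coefficient and costs nothing to evaluate. This base case is consistent with $(d+1)^0 - 1 = 0$. As a sanity check, the case $n=1$ is the classical Horner scheme \eqref{eq:horner}, which uses $d = (d+1)^1 - 1$ multiplications and additions.

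For the inductive step, single out the last variable $x_n$ and write
\[
p(x_1,\dots,x_n) = \sum_{j=0}^{d} a_j(x_1,\dots,x_{n-1})\, x_n^j .
\]
Each $a_j$ is an $(n-1)$-variate polynomial of max-degree at most $d$; this uses the definition of $\deg^*$, since every monomial of $p$ has each exponent at most $d$. The multi-variate Horner scheme then proceeds in two stages:
\begin{enumerate}
  \item evaluate each of the $d+1$ coefficients $a_0,\dots,a_d$ recursively, at cost $M(n-1)$ multiplications and $A(n-1)$ additions each;
  \item combine the resulting numbers by the univariate scheme \eqref{eq:horner} in $x_n$, at cost $d$ multiplications and $d$ additions.
\end{enumerate}

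This gives the recurrence
\[
M(n) = (d+1)\,M(n-1) + d ,
\]
and the identical recurrence for $A(n)$. Substituting the hypothesis $M(n-1) = (d+1)^{n-1} - 1$ yields
\[
(d+1)^n - (d+1) + d = (d+1)^n - 1 ,
\]
which closes the induction.

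The main point needing care is that the count is exact rather than an upper bound. We keep every coefficient even when it vanishes, so the count refers to the scheme applied to a dense coefficient array of size $(d+1)^n$. We also must not double count: the additions in stage 1 happen inside the recursive evaluations, the additions in stage 2 are the Horner additions in $x_n$, and the two sets are disjoint. An equivalent cross-check is to unroll the recursion into a tree with $(d+1)^n$ leaves, one per power-basis coefficient. Each internal Horner step merges one accumulated value with one more coefficient using exactly one multiplication and one addition. Starting from $(d+1)^n$ values and ending with a single value requires $(d+1)^n - 1$ such merges, which confirms the formula without the recurrence.
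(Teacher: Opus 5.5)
Your proof is correct and follows the paper's argument: induction on the number of variables, treating the coefficients $a_0,\dots,a_d$ of the outermost Horner step as $d+1$ recursively evaluated polynomials and solving $M(n)=(d+1)M(n-1)+d$. The differences are cosmetic: you start the induction at $n=0$ rather than $n=1$, and you add a leaf-merging count as an independent check, which the paper does not include.
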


\begin{proof}[Proof by induction]
  The case $n=1$ is given in \cref{eq:horner}. For the induction step $n \to
  n+1$ observe that the $a_0, \dots, a_d$ in \cref{eq:horner} each take by
  assumption $(d+1)^n - 1$ multiplications and additions, so we have in total
  $((d+1)^n - 1) \cdot (d+1) + d = (d+1)^{n+1} - 1$ multiplications and
  additions.
\end{proof}

Note that such a polynomial has $(d+1)^n$ monomials.

\section{Details on Mountain Car Experiments}
\label{sec:details-eval-mountaincar}

\subsection{Training and Evaluation Protocol}
\label{sec:details-training-eval-mountaincar}

REINFORCE training was conducted with AdamW optimizer and a discount factor of
$0.9$, using the following reproducible protocol: We trained $20$ policies for
$100$ episodes. In each episode the environment picks $x_0 \sim
\mathcal{U}([-0.6, -0.4])$. Then $\sigma(s)$ in the stochastic policy was set
zero (for a deterministic evaluation) and $50$ evaluation episodes were
conducted. The one policy with the highest average return is picked for
evaluation.

We did this for Chebyshev policies of degree 3. (Degree 3 performs slightly
better than higher degrees, e.g., degree 5.) We call the resulting policy
\textsc{CH-3-REI}. In \cref{sec:ch3evaluation} we give further details using
other optimizers than AdamW. With \textsc{CH-3-ARS} and \textsc{CH-3-PPO} we
also trained $20$ policies each, with $80$~k steps for the former and $70$~k
steps for the latter. We then took pretrained RL Baselines3 Zoo agents on the
MountainCarContinuous-v0 problem, see \cite{Raffin2024}, namely for \gls{ars}, \gls{sac},
\gls{ppo}, \gls{ddpg}, \gls{td3}, \gls{a2c}, and \gls{trpo}.

Each of the agents together with $\piana$ were evaluated as follows: We ran them
with $x_0$ chosen at $100$ evenly spaced points in $[-0.6, -0.4]$ and recorded
the achieved return $R$. The mean return $\overline{R}$ is a faithful estimator
for the expected return $\mathbb{E}_{x_0 \sim \mathcal{U}([-0.6, -0.4])}(R)$.
The three best performing \gls{sota} agents turned out to be \gls{ars}, \gls{sac} and \gls{ppo},
as summarized in \cref{tab:evalpolicies}. A full comparison is given in
\cref{sec:apx-fullcomp-policies-mountaincar}.
The regret $r$ is the difference of $\overline{R}$ of the evaluated policy
compared to the one of $\piana$. In particular, \textsc{CH-3-REI} trained by
classical REINFORCE improves the regret ($0.77$) compared to \gls{ars} ($2.72$), \gls{sac}
($4.78$) and \gls{ppo} ($5.48$) by a factor of $3.5$, $6.2$ and $7.1$, respectively,
which is surprising given that REINFORCE as the first policy-gradient algorithm
is considered clearly inferior to modern algorithms like \gls{ppo} or \gls{sac}.

All experiments were conducted on an Intel Core i7-7800X CPU with $32$~GB of
RAM. The most time consuming task was the training protocol for
\textsc{CH-3-REI} which took \SI{90}{\minute} with the evaluation taking
another \SI{30}{\minute}. Less time is required for training and evaluation of
\textsc{CH-ARS} and \textsc{CH-PPO} as well as evaluating the RL Baselines3 Zoo
agents.

\subsection{Training Details for REINFORCE with Different Optimizers}
\label{sec:ch3evaluation}

Given the simple nature of REINFORCE as the first policy-gradient \gls{rl}
algorithm, we took a deeper look on the effect of different optimizers on the
training of Chebyshev policies. (This is partly motivated by investigations of
\cite{Waclawek2024} on uni-variate piecewise Chebyshev polynomials in
supervised learning tasks, where the effect of different optimizers was
pronounced.)

We reimplemented the classical Monte Carlo policy gradient REINFORCE algorithm
according to \cite{Sutton2018}, as it is not part of PyTorch.
We trained Chebyshev policies of max-degree $3$ using the classical REINFORCE
algorithm according to \cite{Sutton2018}, utilizing gradient-based PyTorch
optimizers. As in \cref{sec:details-training-eval-mountaincar} $20$ policies
were trained for $100$ episodes, followed by $50$ evaluation episodes. The
results of the evaluation experiment for \textsc{CH-3-REI} are depicted in
\cref{fig:ch3_training_evaluation}. As it is typical for REINFORCE, we have
a large spread in performance of the trained policies.

The one policy with the highest average return was selected and is denoted by
\textsc{CH-3-REI}, which stems from the AdamW optimizer. The step size for
training was set to $0.0003$ and $\theta^{(\sigma)}_{i_1, \dots, i_n}$ where
initialized s.t. $\sigma(s)$ is a value close to $1$, enabling initial
exploration. $\sigma(s)=0$ during evaluation. Training of policies using SGD or
L-BFGS optimizers diverged.

\begin{figure}[h]
  \centering
  \includegraphics[width=\textwidth]{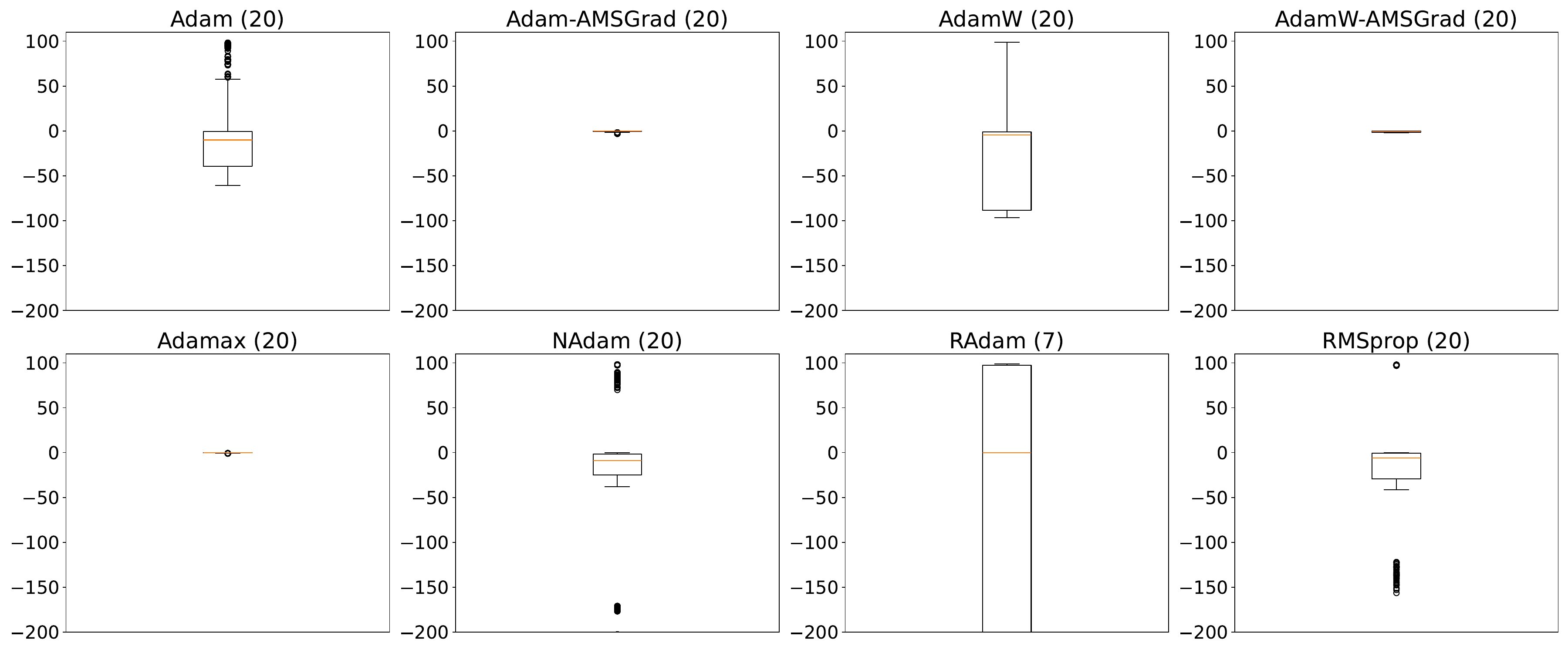}
  \caption{Mountain Car: Evaluation of \textsc{CH-3-REI} results trained with different optimizers, number
  of non-diverging policies utilized is shown in brackets ($20$ before
  training). $50$ episodes per policy, each datapoint is the return of one
  episode.}
  \label{fig:ch3_training_evaluation}
\end{figure}

\subsection{PPO with Different MLP Architectures}
\label{sec:apx-ppo-on-mlp-mountaincar}

A pressing research question from our analysis in \cref{sec:regret-on-sota}
concerns the surprisingly high regret of \gls{mlp} policies with \gls{sota}
\gls{rl} algorithms, like \gls{sac}, \gls{ppo} or \gls{ars} alike.

Let us focus on \gls{ppo}, which is known for its wide applicability, and ask whether
the default network size $[64, 64]$ of the \gls{mlp} policy would explain the
impaired performance. We can exclude underfitting from the simplicity of the
optimal policy $\piana$. Hence, we can focus on a possible overfitting,
although $[64, 64]$ is not quite large.

In \cref{tab:evaluation_mlp_sizes} we reported on the achieved results for
reduced number of layers and reduced sizes of layers. To sum up, when the
network size is reduced, the agent's performance becomes even more degraded.
The results of the default setting match the one reported on RL Baselines3 Zoo.

\begin{table}[hh]
  \centering
  \caption{Performance on Mountain Car. Mean, min and max return for \gls{ppo} agents using different \gls{mlp}
  architectures. Evaluated on $20$ evenly spaced starting positions $x_0 \in
  [-0.6, -0.4]$.}
  \label{tab:evaluation_mlp_sizes}
  \begin{tabular}{lcccr}
    \toprule
    Layer Dimensions     & $\overline{R}$ & $R_{\text{min}}$ & $R_\text{max}$\\
    \midrule
    $[64,64]$ (original) & $94.22$        & $91.16$          & $95.15$\\
    $[32,32]$            & $92.60$        & $89.02$          & $95.82$\\
    $[64]$               & $90.97$        & $90.03$          & $95.13$\\
    $[16,16]$            & $-0.22$        & $-0.32$          & $-0.16$\\
    $[16]$               & $-0.55$        & $-0.55$          & $-0.53$\\
    $[32]$               & $-5.34$        & $-7.80$          & $-4.02$\\
    \bottomrule
  \end{tabular}
\end{table}

\subsection{REINFORCE Does Not Succeed with MLP Policies}
\label{sec:apx-reinforce-fails-mlp-mountaincar}

A surprising result of \cref{sec:eval-cheby-beats-mlp} was that
\textsc{CH-3-REI} based on REINFORCE performs significantly better than all
\gls{mlp} policies trained by different \gls{rl} algorithms, while REINFORCE
would not be able to actually train a successful \gls{mlp} policy. Here we
report on the details on the latter claim.


We again repeat the same training protocol as in
\cref{sec:details-training-eval-mountaincar}: Recall that a stochastic policy
$\pi$ maps a state $s$ to a pair $(\mu(s), \sigma(s))$, which are here modeled
by a \gls{mlp}, respectively. The $\mu$-net is initialized by default
initialization of PyTorch. The $\sigma$-net is initialized with small random
numbers, only the last layer receives a bias to $0.25$. That is, $\sigma$ as
a map initially gives a constant close to $1$ over the state space with small
random fluctuations. (This identical to what we did for Chebyshev policies.)

We tested various \gls{mlp} architectures, starting with the default 2-layer
architecture $[64, 64]$, but also smaller single- and two-layer variants.
(Recall that the optimal policy $\piana$ is quite simple, so $[64, 64]$ should
by far suffice in terms of model capacity.)

In \cref{fig:mlp_reinforce_training_evaluation}, we report on the results for
the six \gls{mlp} architectures and eight different optimizers, and for each
combination we trained 20 agents, resulting in a total of 960 training
attempts. None of these solved the Mountain Car task, only one agent (RAdam,
network size $[8]$) at times reaches the goal with a mean return of about 60.

\begin{figure}[h]
  \centering
  \includegraphics[width=\textwidth]{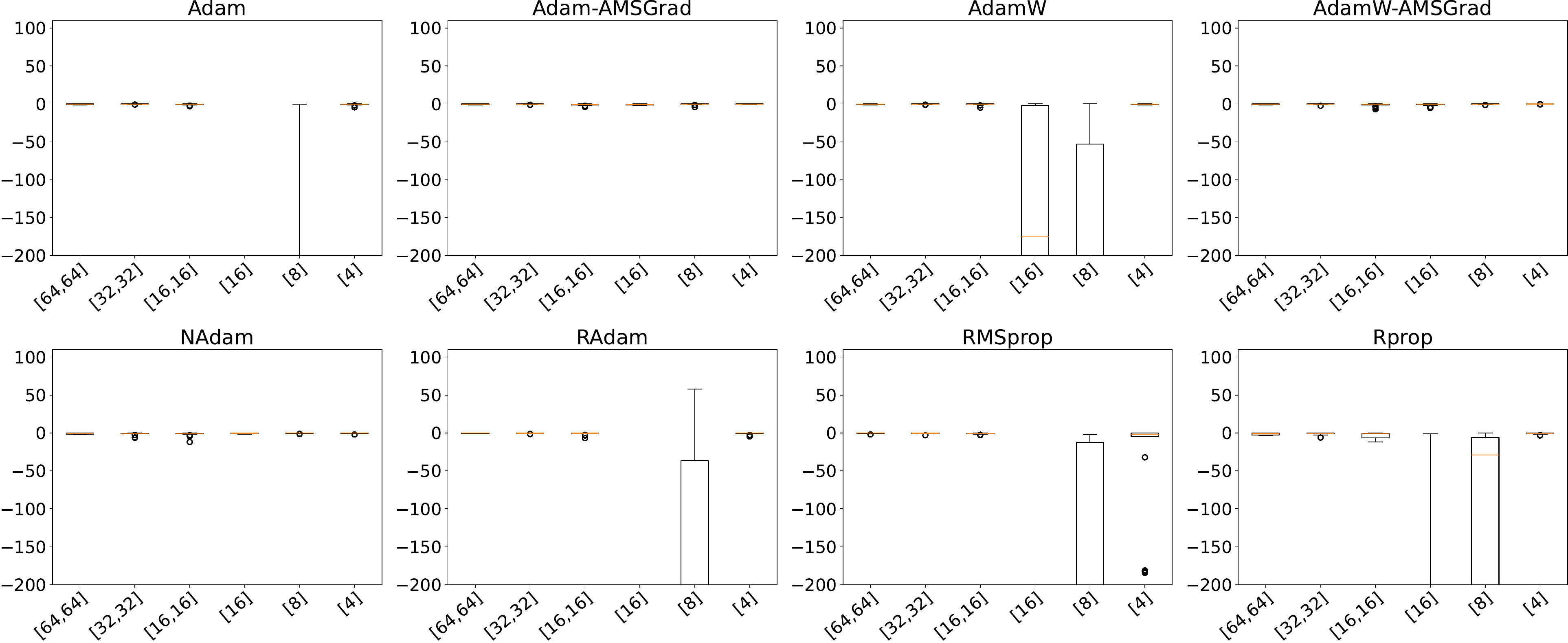}
  \caption{Mountain Car: Evaluation of \gls{mlp} REINFORCE results trained with different
  optimizers. $50$ episodes per policy, each datapoint is the return of one
  episode.}
  \label{fig:mlp_reinforce_training_evaluation}
\end{figure}

\subsection{Full Comparison of Chebyshev Policies Against All RL Baseline3 Zoo Agents}
\label{sec:apx-fullcomp-policies-mountaincar}

In the following take a deeper look on the results briefly summarized in
\cref{tab:evalpolicies} and present a full version of this table in
\cref{tab:evaluation_x0_range}. We compare all pretrained RL Baselines3 Zoo
agents, which are in decreasing order of performance: \gls{ars}, \gls{sac},
\gls{ppo}, \gls{ddpg}, \gls{td3}, \gls{trpo}, \gls{a2c} and \gls{tqc}. In
\cref{tab:evaluation_x0_range} we leave out the \gls{tqc} agent as its
performance lies behind substantially ($\overline{R} = 61.14$ for $x_0 \in
[-0.6, -0.4]$). In the following discussion a couple of aspects of the presented
results.

\begin{table}[p]
  \centering
  \caption{Performance on Mountain Car. Mean, standard deviation, min, and max
    return $R$, and the same for velocity $v_*$ at target and episode length
    $t_*$ for different trained agents over $100$ evenly spaced $x_0 \in [-0.6,
  -0.4]$.}
  \label{tab:evaluation_x0_range}

  \scriptsize
  \begin{tabular}{lcccccccccccccc}
    \toprule
    Policy
                      & $\overline{R}$    & $\std(R)$   & $R_{\min}$        & $R_{\max}$
                      & $\overline{v_*}$  & $\std(v_*)$ & $v_{*,\min}$      & $v_{*,\max}$
                      & $\overline{t_*}$  & $\std(t_*)$ & $t_{*,\min}$      & $t_{*,\max}$
                      & $\norm{\cdot}_2$\\
    \midrule

    $\pioptxzero$     & $99.59$           & $0.0463$                  & $99.47$           & $99.68$
                      & $4.7\cdot10^{-4}$ & $0.0000$                  & $4.7\cdot10^{-4}$ & $4.7\cdot10^{-4}$
                      & $955.14$          & $30.33$                   & $849$             & $999$ &  \\

    $\piana$          & $99.39$           & $0.0768$                  & $99.15$           & $99.52$
                      & $4.7\cdot10^{-4}$ & $0.0000$                  & $4.7\cdot10^{-4}$ & $4.7\cdot10^{-4}$
                      & $769.25$          & $90.8347$                 & $571$             & $968$ &  \\ \midrule

    \textsc{CH-3-ARS} & $98.95$           & $0.0987$                  & $98.74$           & $99.11$
                      & $1.8\cdot10^{-2}$ & $0.0032$                  & $7.2\cdot10^{-3}$ & $2.0\cdot10^{-2}$
                      & $471.65$          & $125.6362$                & $325$             & $985$ & $0.1523$\\

    \textsc{CH-3-REI} & $98.62$           & $0.1661$                  & $98.31$           & $98.89$
                      & $2.4\cdot10^{-2}$ & $0.0069$                  & $3.0\cdot10^{-3}$ & $2.8\cdot10^{-2}$
                      & $396.46$          & $109.5182$                & $246$             & $838$ & $0.0680$ \\

    \textsc{CH-3-PPO} & $98.10$           & $0.2217$                  & $97.61$           & $98.42$
                      & $2.3\cdot10^{-2}$ & $0.0053$                  & $4.8\cdot10^{-3}$ & $2.6\cdot10^{-2}$
                      & $469.97$          & $128.3894$                & $334$             & $985$ & $0.0865$\\ \midrule

    ARS               & $96.67$           & $0.8562$                  & $92.51$           & $97.42$
                      & $4.2\cdot10^{-2}$ & $0.0130$                  & $1.1\cdot10^{-2}$ & $5.3\cdot10^{-2}$
                      & $239.28$          & $84.1711$                 & $152$             & $610$ & $0.2105$ \\

    SAC               & $94.61$           & $1.2345$                  & $89.70$           & $95.77$
                      & $3.8\cdot10^{-2}$ & $0.0150$                  & $1.2\cdot10^{-2}$ & $6.1\cdot10^{-2}$
                      & $105.77$          & $33.1034$                 & $76$              & $179$ & $0.3173$ \\

    PPO               & $93.91$           & $1.1693$                  & $90.86$           & $95.23$
                      & $3.4\cdot10^{-2}$ & $0.0066$                  & $5.7\cdot10^{-3}$ & $3.7\cdot10^{-2}$
                      & $298.01$          & $93.6045$                 & $202$             & $858$ & $0.2730$ \\

    DDPG              & $93.51$           & $0.0486$                  & $93.43$           & $93.56$
                      & $3.9\cdot10^{-2}$ & $0.0048$                  & $2.9\cdot10^{-2}$ & $4.6\cdot10^{-2}$
                      & $66.37$           & $0.4828$                  & $66$              & $67$  & $0.4267$ \\

    TD3               & $93.48$           & $0.0772$                  & $93.36$           & $93.62$
                      & $3.4\cdot10^{-2}$ & $0.0022$                  & $2.9\cdot10^{-2}$ & $3.8\cdot10^{-2}$
                      & $65.95$           & $0.7794$                  & $65$              & $67$  & $0.4260$ \\

    TRPO              & $92.49$           & $0.3872$                  & $90.16$           & $92.78$
                      & $5.2\cdot10^{-2}$ & $0.0078$                  & $3.7\cdot10^{-2}$ & $6.3\cdot10^{-2}$
                      & $86.84$           & $9.5433$                  & $81$              & $157$ & $0.4008$ \\

    A2C               & $91.16$           & $0.2555$                  & $90.46$           & $91.60$
                      & $4.8\cdot10^{-2}$ & $0.0085$                  & $3.2\cdot10^{-2}$ & $6.0\cdot10^{-2}$
                      & $90.44$           & $2.6583$                  & $86$              & $98$  & $0.4194$ \\

    \bottomrule
  \end{tabular}
\end{table}

\begin{figure}[p]
  \centering
  \includegraphics[height=55mm,trim={0mm 3.5mm 0mm 2.5mm},clip]{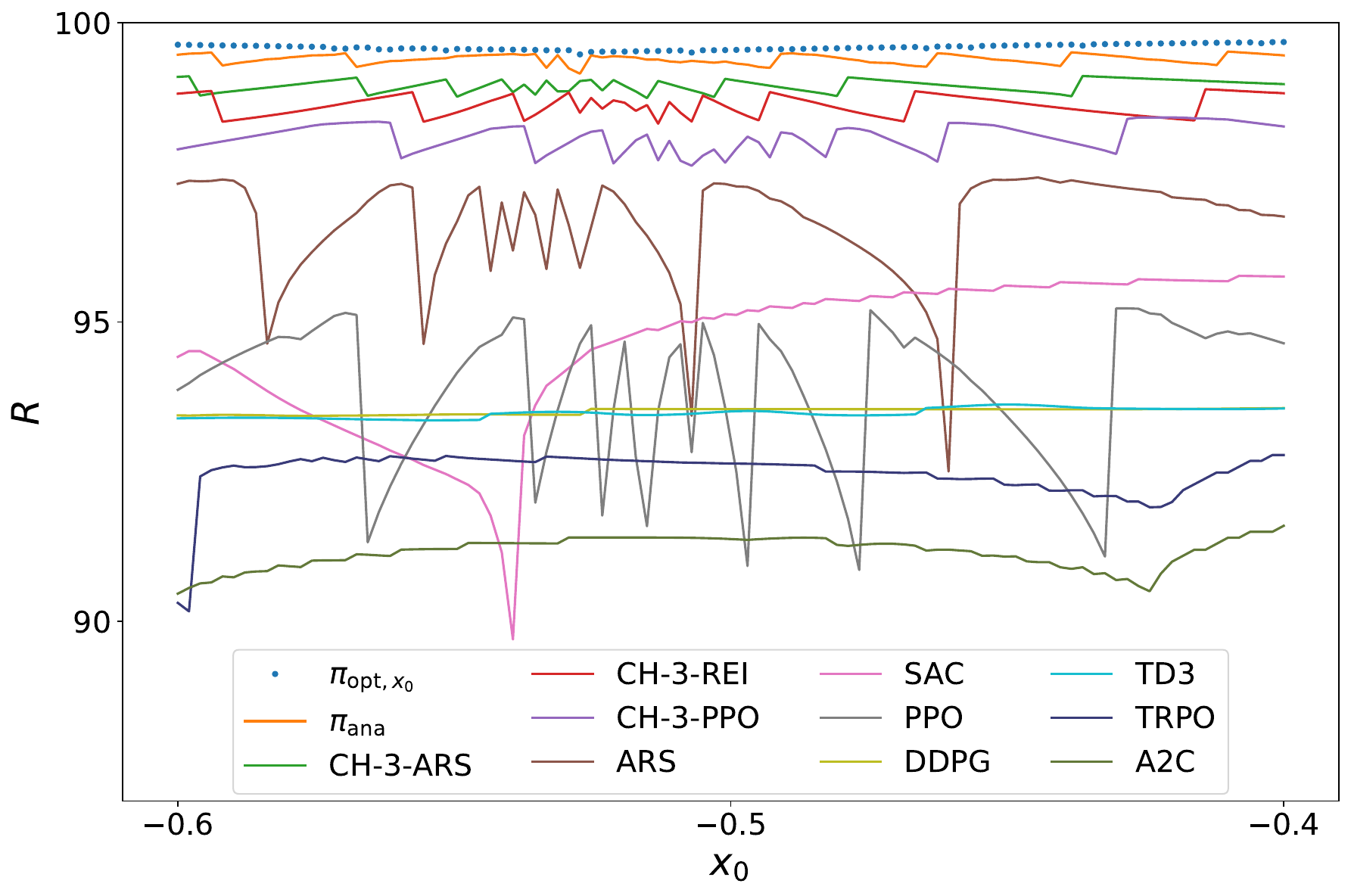}
  \caption{The return $R$ over start positions $x_0$ for all policies from \cref{tab:evaluation_x0_range}.}
  \label{fig:apx-mc-return-over-x0}
\end{figure}

\begin{figure}[p]
  \centering
  \includegraphics[height=92mm,trim={0mm 5mm 0mm 3mm},clip]{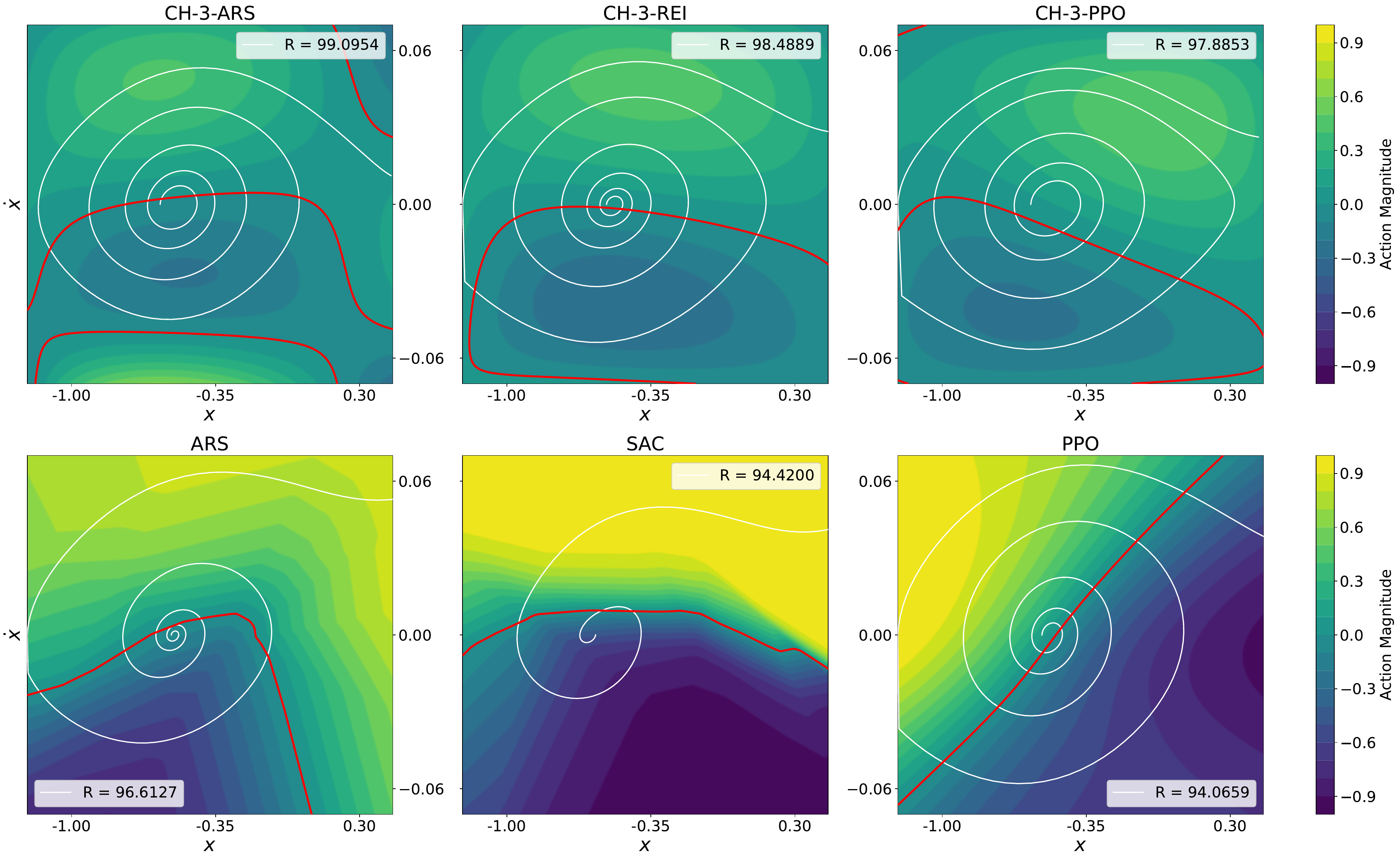}
  \caption{The figures plot the actions of Chebyshev agents and the three
  top-performing neural agents over the state space, the zero-actions in red and
  in white a trajectory from $x_0 = -0.55$.}
  \label{fig:policycomparisonext}
\end{figure}

First of all, we see that the mean target velocity $\overline{v_*}$ of the
optimal policy is essentially zero, independent of $x_0$. This is what we
expect from optimality, otherwise we would suffer from excess kinetic energy at
the goal. Furthermore, $\pioptxzero$ essentially exploits the maximum possible
time $t_*$ to reach the goal, with the remainder to 999 being explained by the
restriction that the number of strokes is simply an integral number. In
contrast to $\pioptxzero$, the analytical optimal policy $\piana$ does not have
the advantage of known $x_0$ a priori and hence has lower $\overline{t_*}$.

As already discussed in \cref{tab:evalpolicies}, all Chebyshev policies
outperform all the \gls{mlp} policies. Moreover, when we look at the standard
deviation, we observe that the Chebyshev policies have lower standard deviation
than \gls{mlp} policies, except the low-performing \gls{mlp} policies \gls{ddpg} or
worse.

Taking a closer look at the minimum return $R_{\min}$, we see that the top
\gls{mlp} policy \gls{ars} delivers a very inconsistent performance, at times falling
below low-performing \gls{mlp} policies, like \gls{ddpg} or \gls{td3}. We highlight this
behavior in \cref{fig:apx-mc-return-over-x0}, where we plot the return $R$ over
all start positions $x_0$.

When we look at the mean time $\overline{t_*}$ when a policy reaches the goal,
we consistently see that the Chebyshev policies exploit $t_* \le 999$ more
effectively an come closer to $\piana$. What we also observe is that the
low-performing policies have a low standard deviation $\std(t_*)$, especially
for \gls{ddpg} to \gls{a2c}. This indicates that they learn a policy leading to
trajectories of equal length, quite independent of the start position $x_0$,
which is consistent with low values of $\overline{t}_*$ in total, i.e., they
reach the goal without exploiting multiple forth-and-back oscillations. This
observation is confirmed by the plotted trajectory for \gls{sac} in
\cref{fig:policycomparisonext}.

Furthermore, when we compare a policy
$\pi$ to $\piana$ in terms of the $L_2$-norm, i.e., by looking at $\|\pi
- \piana\|_2$ over the state space domain $[-1.2, 0.45] \times [-0.07, 0.07]$,
  then again the Chebyshev policies approximate $\piana$ much closer than the
  \gls{mlp} policies. Taking a closer look at the \textsc{CH-3-ARS} in
  \cref{fig:policycomparisonext}, we see that the difference to $\piana$ mainly
  occurs in areas with no or less relevance for the occurring trajectories.

\subsection{In-Depth Comparison of Policies}
\label{sec:apx-indepth-policies-mountaincar}

In \cref{fig:policycomparisonext} we extend our discussion from
\cref{fig:policycomparison} by extending the policy plots to all Chebyshev
policies and the three top performing \gls{mlp} policies. The observations we
mentioned in \cref{sec:control-strategies-compare} generalize to the larger set
of policies in \cref{fig:policycomparisonext} as follows.

All \gls{mlp} policies generate larger actions than Chebyshev policies.
Consequently, the number of strokes for \gls{mlp} is smaller, so is $t_*$,
which lowers the return. Only \gls{ppo} displays a larger number of strokes, but only
because it counteracts the dynamics of Mountain Car, i.e., it outputs actions
in opposite direction of the velocity of the car.

This counter action of the dynamics is also present for \gls{ars}, though at regions
of the state space with less impact on the trajectories. This also illustrates
limitations of the quality criterion $\|\pi - \piana\|_2$ when the deviation is
mainly attributed to irrelevant regions of the state space.

\subsection{Curse of Dimensionality}
\label{sec:apx-dimensionality}

For a state space of dimension $n$ and max-degree $d$, we have $(d+1)^n$
Chebyshev polynomials. To investigate the impact of $d$ on the return,
computational performance, and numerical stability, we repeat training of
Chebyshev policies using \gls{ppo} on Mountain Car with increasing max-degree
from $1$ up to $50$. For each degree, again, $20$ agents were trained
independently and evaluated by averaging the return over $100$ deterministic
starting states sampled from the initialization range. The results are shown in
\cref{tab:evaluation_rising_degrees}.

Overall, return improves rapidly once the policy class becomes sufficiently
expressive. Moreover, the return remains stable over a wide range of higher
degrees, still beating all neural \gls{sota} agents even with $d=20$. Beginning
with degrees around $30$, however, several agents do not succeed in reaching
the goal flag any longer and standard deviation of overall return increases.

However, these results highly depend on the choice of \gls{rl} algorithm.
Classical REINFORCE would already fail at max-degree 10. Also, performance
(i.e., simulation steps per second) drops, though not quadratic (as the number
of polynomials). However, we have not optimized for computational performance
yet, e.g., employ dynamic programming strategies on the recursive computation
scheme of multi-variate Chebyshev polynomials.

\begin{table}[p]
  \centering
  \caption{\textsc{CH-3-PPO} performance on Mountain Car for different
  polynomial degrees. Training was conducted with $20$ agents for each degree,
  showing mean return $\overline{R}$ and standard deviation across all agents as well as the best
  performing agent per degree. 
  Evaluation conducted over $100$ evenly spaced $x_0 \in [-0.6, -0.4]$.}
  \label{tab:evaluation_rising_degrees}
  \begin{tabular}{lccccc}
    \toprule
    Degree & $\overline{R}$ all    & $\std(R)$ all   & $\overline{R}$ best    & $\std(R)$ best & Steps/s \\
    \midrule
    1 & -0.0975 & 0.0842 & -0.0043 & 0.0002 & 61.0 \\
    2 & -0.6159 & 0.1649 & -0.3784 & 0.2569 & 53.9 \\ 
    3 & 88.4433 & 30.117 & 98.9623 & 0.0769 & 50.4 \\ 
    4 & 97.9296 & 0.1271 & 98.1458 & 0.2278 & 46.9 \\ 
    5 & 97.78 & 0.1565 & 98.2048 & 0.1625 & 43.0 \\
    6 & 97.5848 & 0.1891 & 98.0292 & 0.1855 & 38.5 \\
    7 & 97.6631 & 0.2513 & 98.11 & 0.2196 & 36.9 \\
    8 & 97.5722 & 0.3585 & 98.1088 & 0.2061 & 34.5 \\
    9 & 97.623 & 0.244 & 97.9115 & 0.2099 & 32.0 \\
    10 & 97.3698 & 0.2612 & 97.7654 & 0.2519 & 29.2 \\
    11 & 97.4387 & 0.3251 & 97.838 & 0.2133 & 27.8 \\
    12 & 97.3455 & 0.2943 & 97.8038 & 0.2306 & 25.6 \\
    13 & 97.3748 & 0.2645 & 97.8012 & 0.1816 & 23.2 \\
    14 & 97.2876 & 0.2562 & 97.6679 & 0.1562 & 22.8 \\
    15 & 97.1798 & 0.3041 & 97.6372 & 0.3292 & 21.9 \\
    16 & 97.0765 & 0.5233 & 97.7306 & 0.1682 & 19.4 \\
    17 & 97.0275 & 0.3151 & 97.5145 & 0.1303 & 18.6 \\
    18 & 96.952 & 0.2965 & 97.4589 & 0.1814 & 18.0 \\
    19 & 96.6435 & 0.4085 & 97.3213 & 0.185 & 16.1 \\
    20 & 96.6991 & 0.7013 & 97.3904 & 0.1901 & 15.4 \\
    30 & 77.2577 & 38.7163 & 95.7942 & 0.331 & 9.8 \\
    40 & 80.4135 & 19.5194 & 94.6989 & 0.4354 & 6.0 \\
    50 & 80.7739 & 21.937 & 94.9583 & 0.5967 & 4.0 \\
    \bottomrule
  \end{tabular}
\end{table}

\subsection{Evaluation with Additional Stable Baselines3 RL Algorithms}
\label{sec:mc-ext-eval-sb3}

In the following, we extend our experimental evaluation to different \gls{rl}
algorithms available in Stable Baselines3. However, in order to avoid laborous
native implementation of Chebyshev policies for different \gls{rl} algorithms,
we followed the following approach: We use a single neuron without bias and
linear activation and pass it as network architecture to the Stable Baselines3
API, to effectively form the linear combination of the Chebyshev polynomials as
in \cref{eq:cheby-mu}.

We again run experiments on Mountain Car. To verify consistency, we also re-ran
\gls{ppo} and \gls{ars} experiments with this setup, and added \gls{sac} as
additional algorithm. In this way, results now cover an on-policy, off-policy
and random search algorithm. Results (20 independently trained agents,
evaluated over 100 uniform initial states) are shown in \cref{tab:mc-wrapper}.
We see that both \gls{ars} and \gls{ppo} results align with what we have seen
with native implementations (see \cref{tab:evaluation_x0_range}).
\textsc{CH-SAC} follows this line of results, still outperforming all \gls{mlp}
\gls{sota} agents (including \gls{sac} with \gls{mlp}).

\begin{table}[tb]
  \centering
  \caption{Performance on Mountain Car with Chebyshev representation layer with single-node \gls{mlp}
  architecture. Training 20 agents with distinct seeds and picking the result
  with the highest mean return $\overline{R}$. Showing mean return $\overline{R}$
  (regret $r$), standard deviation as well as min and max return.}
  \label{tab:mc-wrapper}
  \small
  \begin{tabular}{llccc}
    \toprule
    Pol.~$\pi$         & $\overline{R}\uparrow$ ($r\downarrow$) & $\std(R)$ & min $R$ -- max $R$ \\ \midrule
    $\piana$           & 99.39                                  & 0.0768 & 99.15   -- 99.52   \\ \midrule
    \textsc{\bfseries CH-3-ARS}  & 98.72 (0.67)                 & 0.13 & 98.45--98.93   \\
    \textsc{\bfseries CH-4-PPO}  & 98.47 (0.92)                 & 0.16 & 98.17--98.74   \\
    \textsc{\bfseries CH-3-SAC}  & 98.19 (1.20)                 & 0.22 & 97.64--98.52   \\
    \bottomrule
  \end{tabular}
\end{table}

\section{Experimental Results on Additional Environments}
\label{sec:apx-further-envs}

\subsection{Pendulum}

In this section we extend our discussion from \cref{sec:eval-cheby-other-tasks}
concerning the performance of Chebyshev policies against pretrained RL
Baselines3 Zoo agents on the Gymnasium environment \emph{Pendulum-v1}. For the
sake of comparison to other environments, we again trained Chebyshev policies
with \gls{ars}, \gls{ppo} and REINFORCE. Preliminary experiments revealed that \gls{ars} performs
best with max-degree 6 Chebyshev policies, \gls{ppo} performs best with max-degree
5 and REINFORCE performs best with max-degree 3, and they are accordingly named
\textsc{CH-6-ARS}, \textsc{CH-5-PPO}, and so on. In the following we exclude
\textsc{CH-3-REI} as it lags behind significantly with $\overline{R} = -466.29$.

\begin{table}[b] \centering
  \caption{Performance on pendulum environment. Evaluation over $50 \times 50$
  evenly spaced initial angles from $[-\pi, \pi]$ and angular velocities from
$[-1, 1]$, showing mean, standard deviation, min and max returns.}
  \label{tab:evalpoliciespendulum}
  \small
  \begin{tabular}{lccccc}
    \toprule
    Policy                      & $\overline{R}$ & $\std(R)$ & $\min R$ & $\max R$ \\
    \midrule
    \textsc{SAC}                & -147.24        & 83.39     & -377.46  & -1.08   \\
    \textsc{TQC}                & -147.83        & 84.22     & -380.56  & -1.01   \\
    \textsc{\bfseries CH-6-ARS} & -150.80        & 88.12     & -392.02  & -0.21   \\
    \textsc{TD3}                & -155.00        & 92.71     & -377.14  & -0.51   \\
    \textsc{DDPG}               & -156.64        & 106.50    & -1491.25 & -1.26   \\
    \textsc{\bfseries CH-5-PPO} & -162.75        & 98.62     & -486.60  & -1.97   \\
    \textsc{A2C}                & -167.90        & 102.09    & -528.22  & -0.08   \\
    \textsc{\bfseries PPO}      & -176.16        & 107.17    & -516.15  & -3.83   \\
    \textsc{TRPO}               & -180.65        & 141.49    & -1491.89 & -0.25   \\
    \textsc{\bfseries ARS}      & -218.32        & 159.30    & -784.63  & -0.07   \\
    \bottomrule
  \end{tabular}
\end{table}

We summarize our experimental results in \cref{tab:evalpoliciespendulum}. As we
discussed in \cref{sec:eval-cheby-other-tasks}, Chebyshev approximators improve the mean
return of \gls{mlp} policies for \gls{ars} and \gls{ppo}. We also list the standard
deviation $\std(R)$ in \cref{tab:evalpoliciespendulum} and see that the
attained returns are quite spread, which is expected given that a random
initialization of the pendulum in upright position enables a possible return of
zero, while the opposite initial position lowers the return substantially.
(The optimal control for this environment is unknown, and hence we can only
speculate on the regret of the current \gls{sota}.)

\begin{figure}[tb]
  \centering
  \includegraphics[width=.6\textwidth]{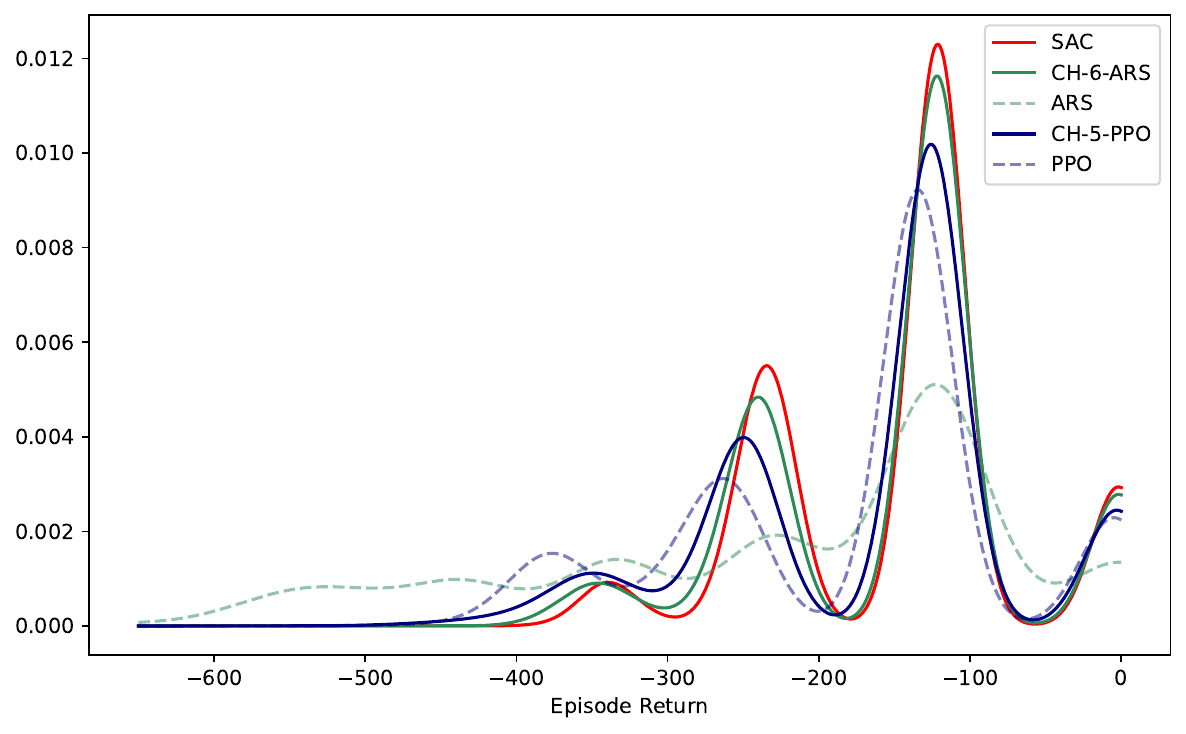}
  \caption{Pendulum: Density function of the return distribution for Chebyshev
  policies and their \gls{mlp} counterparts along with the top-performing policy
trained by \gls{sac}.}
  \label{fig:pendulum-return-density}
\end{figure}

The high standard deviation motivated us to take a closer look at the density
function of the return distribution, see \cref{fig:pendulum-return-density}.
Here we see that indeed the Chebyshev policies move density mass from lower
returns to higher returns and, moreover, the performance of the top-performer
\gls{sac} and \textsc{CH6-ARS} are very close, what we also see from
\cref{tab:evalpoliciespendulum}. Interestingly, for \gls{mlp} policies, \gls{ars}
actually performed worst.

\subsection{Real-World Quanser Aero 2 System}

\paragraph{Mechatronic Background}

The Quanser Aero~2 is a motion control testbed for
multiple-input-multiple-output control theory with the goal to position the
beam by actuating two fans, see \cref{fig:aero2testbed}. It can be put into
a 1-DOF or 2-DOF configuration. In either case it displays pronounced
non-linear system behavior by design for educational reasons.

\begin{figure}[htb]
  \centering
  \includegraphics[width=0.35\textwidth]{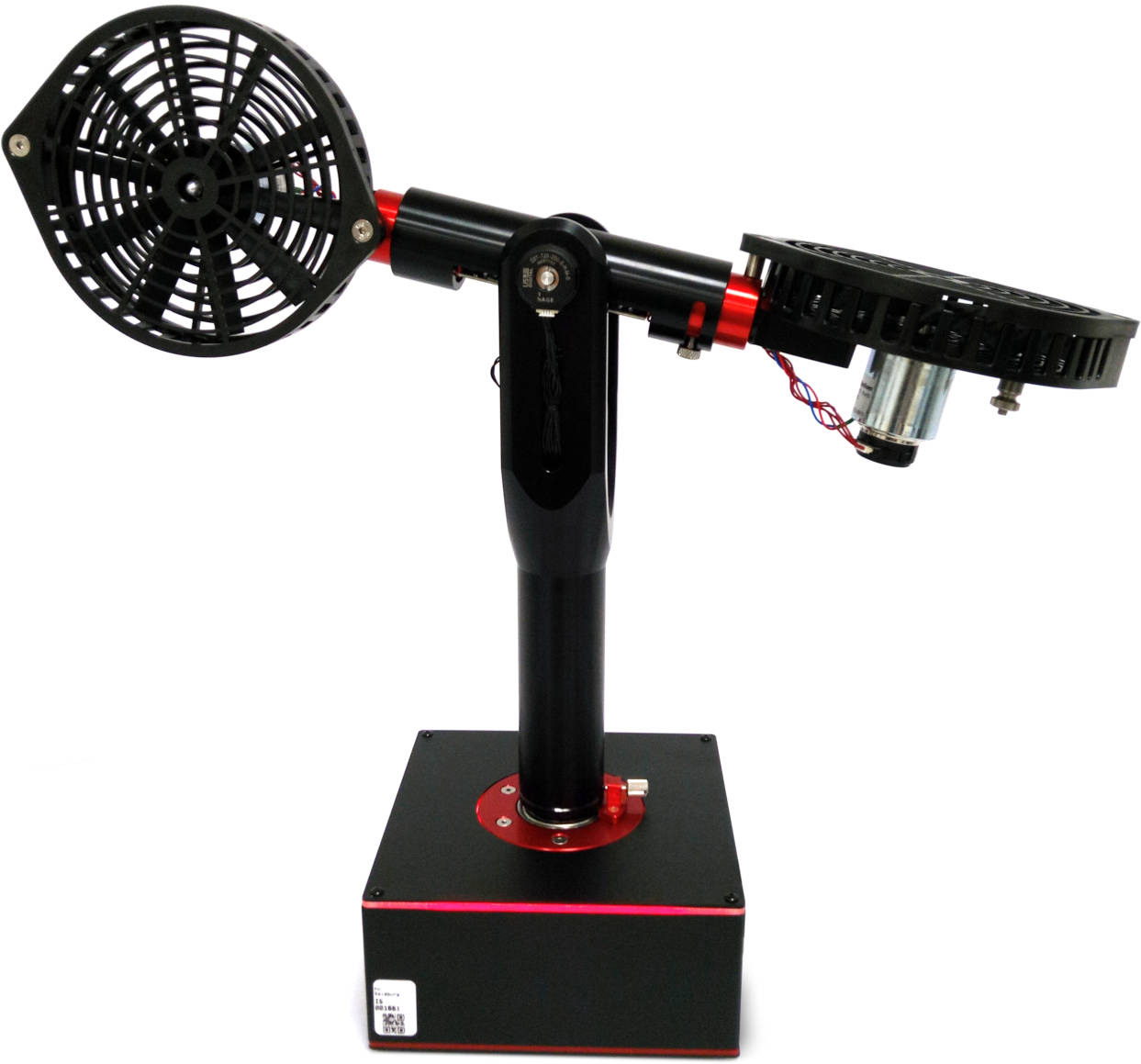}
  \caption{The Quanser Aero~2 testbed in 2-DOF configuration. We turn the fans
  in the same direction and lock the main arm in order to obtain the 1-DOF
configuration of the Aero~2.}
  \label{fig:aero2testbed}
\end{figure}

We follow the configuration in \cite{Schaefer2024a}, which is the 1-DOF
configuration. In this work, a control task is set up where the beam shall
follow a target pitch signal.
The \gls{rl} agent observes the actual pitch angle, actual angular velocity and
a target pitch angle, and outputs actions in form of a continuous voltage for
the fans.
The state at time $t$ is formalized as a vector $(\theta_t, \theta^*, \theta_t
- \theta_{t-1})$, where $\theta_t$ is the actual pitch angle, $\theta^*_t$ is
the target pitch angle, and $\theta_t - \theta_{t-1}$ captures the actual pitch
velocity, at time $t$, respectively. The action is a scalar output voltage.
The objective is to minimize the absolute tracking error over time, formalized
by the return $R = \sum_t r_t$, which is the cumulation of the reward $r_t
= -|\theta_t  - \theta^*_t|$.

\paragraph{Environment Implementation Details}

For the \gls{rl} training, we again leverage the Gymnasium environment framework
and use Stable Baselines3 implementations of \gls{ppo} and \gls{ars}. To this
end, we needed to implemented a custom Gymnasium environment that integrates the
Aero~2 system. This implementation uses a high-fidelity simulation model of the
Aero~2 system dynamics as introduced and verified by \cite{Schaefer2024b}. The code 
for this environment is published at \cite{UngerSchaefer2026}.

\paragraph{Training and Evaluation}

Since the control task is to follow a target pitch signal, for our training, we
generate a target pitch signal that (i) is continuous with limited dynamics
such that an agent can potentially follow given the dynamic ability of the
Aero~2 and (ii) also contains constant sections in order to judge the steady
state error, which is an interesting quality criterion in control theory. (Note
that return maximization implicitly leads to stead-state error minimization.)

To this end, we generate for each episode a target pitch signals that consist
of a random, continuous sequence of sinusoidal and constant sections. Its total
length is 2000 steps, which equals to \SI{200}{\second} by a sample time of
\SI{100}{\milli\second}. The sections are of uniform random length between
\SIrange{2.25}{15}{\second} and each section ends at a new intermediate uniform
random target pitch in the interval \SIrange{-40}{40}{\degree}, i.e.,
\SIrange{-0.698}{0.698}{\radian}. In \cref{fig:evaltrajectoriesaero} at the top
subfigure we see such a random target pitch signal.

\begin{figure}[h]
  \centering
  \includegraphics[height=85mm,trim={0mm 3.25mm 0mm 2.5mm},clip]{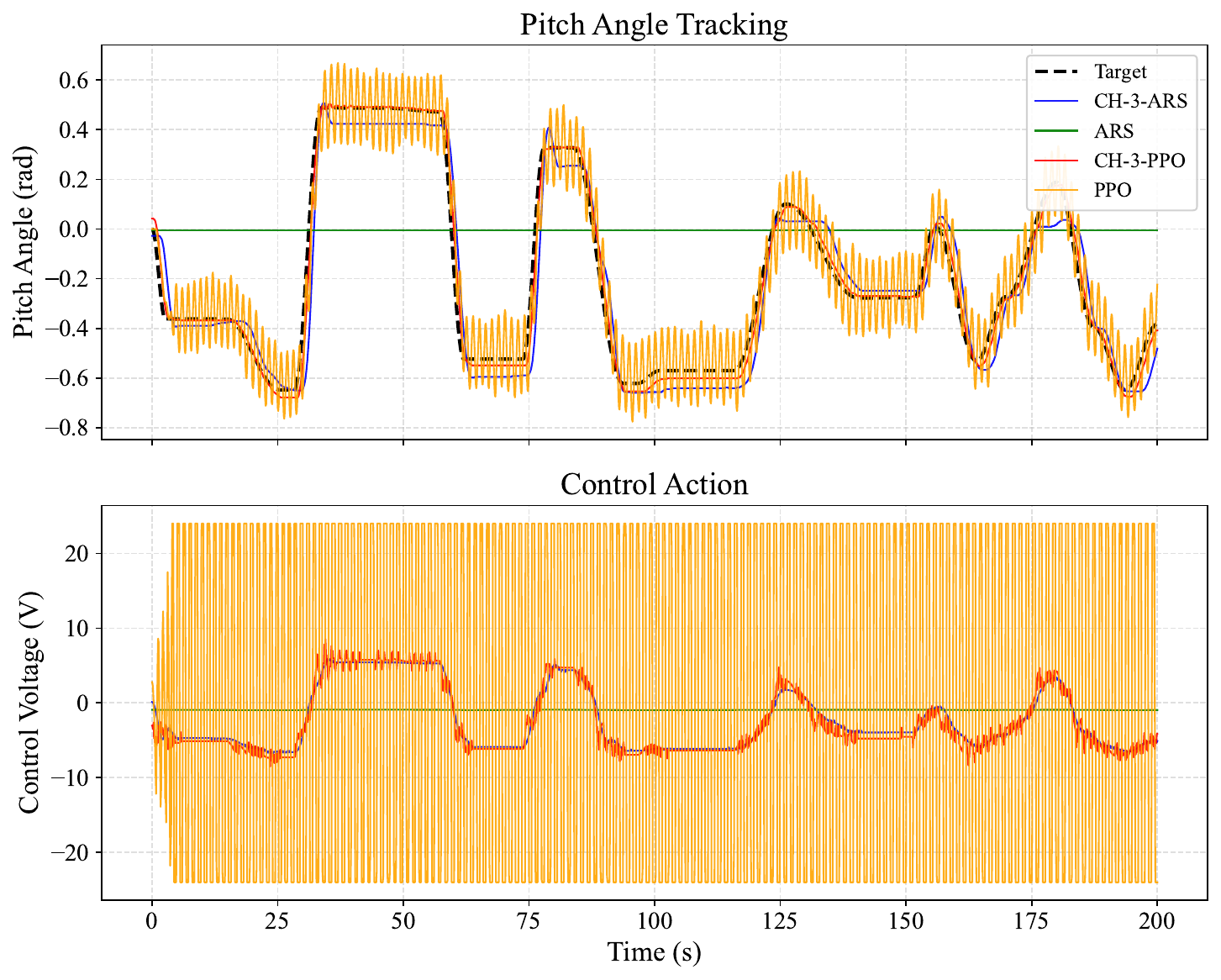}
  \caption{Details on one of ten evaluation trajectories performed on the real-world Quanser Aero 2 system.}
  \label{fig:evaltrajectoriesaero}
\end{figure}

We train \gls{mlp} and Chebyshev policies with \gls{ppo} and \gls{ars} as in the other
experiments. For the \gls{mlp} we again use the default 2-layer network
architecture of size $[64, 64]$, as suggested in \cite{Schaefer2024a}. For the
Chebyshev policies we found that max-degree 3 polynomials worked best. \Gls{ppo}
trained for \num{150000} steps and \gls{ars} trained for \num{4000000}, for both
\gls{mlp} and Chebyshev policies. As in previous experiments, we train 12
agents for each combination and select the best policy for evaluation.


We evaluate the policies on a set of 10 random target pitch signals and collect
the return, the mean power consumption and the mean action magnitude. The seed
of the evaluation set is fixed and therefore reproducible. The power
consumption is calculated by the product of motor coil current and voltage,
i.e., it is measured on the real system and simulated in Gymnasium. In
\cref{tab:evalpoliciesaero} we summarize the results when the evaluation is
performed in the simulation environment.

We furthermore evaluated the policies trained in simulation on the real-world
system, which provides us with insights on the sim-to-real transferability.
The results are summarized in \cref{tab:evalpoliciesaeroreal}.

The sim-to-real transferability is paramount for real-world \gls{rl}, because
from-scratch training of \gls{rl} agents on the real system is typically
prohibited due to safety limitations, risk of damage and wear, and is bound to
real-time speed and therefore virtually unfeasible. So even if we continue
training on the real-system, we typically have to start with agents pretrained
in simulation.

Let us first discuss the results from the simulation environment in
\cref{tab:evalpoliciesaero}. In all cases, Chebyshev policies again outperform
their \gls{mlp} counterparts.

In more detail, we recognize that \gls{ars} with an \gls{mlp} policy essentially
failed to control the beam in a way to follow the target pitch signal
sufficiently, leading to a high average pitch deviation (i.e., large negative
return), see also further details in \cref{fig:evaltrajectoriesaero}. (Note
that \cref{fig:evaltrajectoriesaero} actually displays the results on the real
system, but the main messages are the same for the simulation environment.) In
contrast, \gls{ars} did succeed for Chebyshev policies, although \textsc{CH-3-ARS}
clearly displays a significant steady-state error in
\cref{fig:evaltrajectoriesaero}.

\textsc{PPO} achieves a much better pitch deviation than \textsc{ARS}.
However, the output voltage (and power consumption) is remarkably high and
a closer look to \cref{fig:evaltrajectoriesaero} reveals the reason for this:
\textsc{PPO} displays pronounced oscillation around the target pitch caused by
the bang-bang strategy apparently learned by \gls{ppo}. However, again with Chebyshev
policies, this unfavorable behavior disappears and \textsc{CH-3-PPO} achieves
the best return and also displays most favorable dynamic behavior from
a control-theoretic point of view, which we see in power consumption and action
magnitude.

Finally, let us discuss the performance of the policies when evaluated on the
real system.
Since \textsc{ARS} failed to control the beam in simulation, it is expected
that it also does so on the real system and hence we receive essentially the
same pitch deviation. For \textsc{CH-3-ARS} we see that the pitch deviation
gets worse by \SI{31}{\percent}, so it moderately retained its performance.
We see that the transfer from simulation to the real system caused
\textsc{PPO} to increase its pitch deviation by a factor of \num{2.151}, while
\textsc{CH-3-PPO} essentially could retain its performance and only increased
the pitch deviation by \SI{13.4}{\percent}. As a result, on the real system
\textsc{CH-3-PPO} now outperforms \textsc{PPO} by a factor of 3.3 in terms of
pitch deviation.
On a side note, the bang-bang control of \textsc{PPO} actually caused
significant heat on the motors of the Aero~2, and required us to insert
cool-down phases in the evaluation procedure.

\begin{table}[h]
  \centering
  \caption{Performance evaluation on the Quanser Aero~2 Gymnasium simulation
  environment over $10$ evaluation episodes.
  For each episode we report on the average pitch deviation, average action
  magnitude and average power consumption and list it as $\mu \pm \sigma$,
  where $\mu$ is the mean and $\sigma$ is the standard deviation over the 10
  episodes. The mean return $\overline{R}$ over all episodes is the negative
  average pitch deviation multiplied by the number of episode steps, which is
  2000.
}
  \label{tab:evalpoliciesaero}
  \small
  \begin{tabular}{lcrrr}
    \toprule
    Pol.~$\pi$         & Pitch deviation (\unit{\radian}) $\downarrow$ & $\overline{R}$ $\uparrow$ & Action Magnitude (\unit{\volt}) $\downarrow$ & Power Consumption (\unit{\watt}) $\downarrow$  \\
    \midrule
    \textsc{CH-3-ARS}  & $0.0626 \pm 0.0113$                           &  $-125.2$                 & $4.0828  \pm 0.4391$                         & $1.3331  \pm 0.2465$                           \\
    \textsc{ARS}       & $0.3609 \pm 0.0615$                           &  $-721.8$                 & $0.9305  \pm 0.0063$                         & $0.0559  \pm 0.0008$                           \\ \midrule
    \textsc{CH-3-PPO}  & $0.0246 \pm 0.0038$                           &  $-49.2 $                 & $4.4338  \pm 0.5889$                         & $1.6545  \pm 0.3647$                           \\
    \textsc{PPO}       & $0.0423 \pm 0.0037$                           &  $-84.6 $                 & $16.3623 \pm 0.4699$                         & $37.0509 \pm 1.9717$                           \\
    \bottomrule
  \end{tabular}
\end{table}

\begin{table}[h]
  \centering
  \caption{Performance evaluation of agents trained in simulation on the real
    Quanser Aero 2 system without additional training in analogy to
    \cref{tab:evalpoliciesaero}.
    We also listed the quotient of the pitch deviation between the evaluation
    on the real and the simulated system.
  }
  \label{tab:evalpoliciesaeroreal}
  \small
  \begin{tabular}{lccrrr}
    \toprule
    Pol.~$\pi$         & Pitch deviation (\unit{\radian}) $\downarrow$& Real/sim deviation $\downarrow$ & $\overline{R}$ $\uparrow$ & Action Magnitude (\unit{\volt}) $\downarrow$ & Power Consumption (\unit{\watt}) $\downarrow$  \\
    \midrule
    \textsc{CH-3-ARS}  & $0.0821 \pm 0.0064$                          & $1.312$                         & $-164.2$                  & $4.1984 \pm 0.3757$                          & $0.3604  \pm 0.0765$             \\
    \textsc{ARS}       & $0.3592 \pm 0.0512$                          & $0.995$                         & $-718.4$                  & $0.9275 \pm 0.0067$                          & $0.0049  \pm 0.0006$             \\ \midrule
    \textsc{CH-3-PPO}  & $0.0279 \pm 0.0054$                          & $1.134$                         & $-55.8 $                  & $4.6035 \pm 0.3789$                          & $0.4966  \pm 0.0857$             \\
    \textsc{PPO}       & $0.0910 \pm 0.0064$                          & $2.151$                         & $-182.0$                  & $21.1870 \pm 0.9451$                         & $52.6000 \pm 2.2246$             \\
    \bottomrule
  \end{tabular}
\end{table}



\end{document}